\icmltitlerunning{On the Connection Between Adversarial Robustness and Saliency Map Interpretability}
\newtheorem{definition}{Definition}
\newtheorem{lemma}{Lemma}
\DeclareMathOperator*{\D}{\mathcal{D}}
\DeclareMathOperator*{\loss}{\mathcal{L}}
\def\R{{\mathbb{R}}}
\DeclareMathOperator*{\argmax}{arg\,max}
\DeclareMathOperator*{\sgn}{sgn}
\def\E{{\mathbb{E}}}
\newcommand{\vertiii}[1]{{\left\vert\kern-0.25ex\left\vert\kern-0.25ex\left\vert #1 
    \right\vert\kern-0.25ex\right\vert\kern-0.25ex\right\vert}}
\newcommand{\Expectx}[1]{\E_{x \sim \D} \left[ #1 \right]}
\newcommand{\alignment}{\alpha^{\dagger}}
\begin{document}

\twocolumn[
\icmltitle{On the Connection Between Adversarial Robustness and Saliency Map Interpretability}



\icmlsetsymbol{equal}{*}

\begin{icmlauthorlist}
\icmlauthor{Christian Etmann}{equal,hb,camv}
\icmlauthor{Sebastian Lunz}{equal,cam}
\icmlauthor{Peter Maass}{hb}
\icmlauthor{Carola-Bibiane Sch\"{o}nlieb}{cam}
\end{icmlauthorlist}

\icmlaffiliation{hb}{Center for Industrial Mathematics, University of Bremen, Bremen, Germany}
\icmlaffiliation{cam}{DAMTP, University of Cambridge, Cambridge, United Kingdom}
\icmlaffiliation{camv}{Work done at DAMTP, Cambridge.}

\icmlcorrespondingauthor{Christian Etmann}{cetmann@math.uni-bremen.de}
\icmlcorrespondingauthor{Sebastian Lunz}{lunz@math.cam.ac.uk}

\icmlkeywords{Machine Learning, ICML}

\vskip 0.3in
]



\printAffiliationsAndNotice{\icmlEqualContribution} 

\begin{abstract}
Recent studies on the adversarial vulnerability of neural networks have shown that models trained to be more robust to adversarial attacks exhibit more interpretable saliency maps than their non-robust counterparts. We aim to quantify this behavior by considering the alignment between input image and saliency map. We hypothesize that as the distance to the decision boundary grows, so does the alignment. This connection is strictly true in the case of linear models. We confirm these theoretical findings with experiments based on models trained with a local Lipschitz regularization and identify where the non-linear nature of neural networks weakens the relation.
\end{abstract}
\section{Introduction}
Despite impressive results in a variety of classification tasks \citep{lecun2015deep}, even highly accurate neural network classifiers are plagued by a vulnerability to so-called \emph{adversarial perturbations} \citep{Szegedy2013IntriguingPO}. These adversarial perturbations are small, often visually imperceptible perturbations to the network's input, which however result in the network's classification decision being changed. Such vulnerabilities may pose a threat to real-world deployments of automated recognition systems, especially in security-critical applications such as autonomous driving or banking. This has sparked a large number of publications related to both the creation of adversarial attacks \citep{explainingharnessing,kurakin2016adversarial,moosavi2016deepfool} as well as defenses against these (see \citep{schott} for an overview). Apart from the application-focused viewpoint, the observed adversarial vulnerability offers non-obvious insights into the inner workings of neural networks.
One particular method of defense is \emph{adversarial training} \citep{madry2017towards}, which aims to minimize a modified training objective. While this method -- like all known approaches of defense -- decreases the accuracy of the classifier, it is also successful in increasing the robustness to adversarial attacks, i.e. the perturbations need to be larger on average in order to change the classification decision.
\begin{figure}
    \centering
    \includegraphics[width=.95\linewidth]{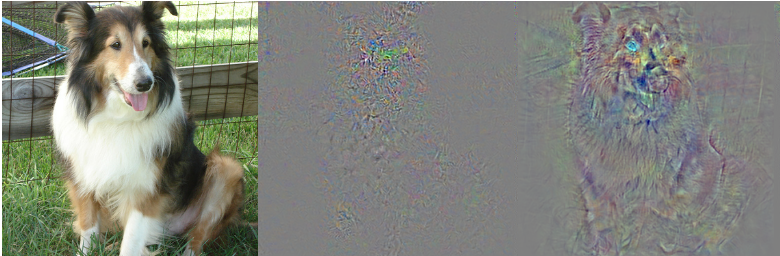}
    \caption{An image of a dog (left), the saliency maps of a highly non-adversarially-robust neural network (middle) and of a more robust network (right). We observe that the robust network gives a much clearer indication of what the classifier deems to be discriminative features. Details about saliency and the robustification are given in section \ref{sec:experiments}. Most figures are best viewed on a screen.}
    \label{fig:dog}
\end{figure}

\citep{tsipras2018robustness} also notice that networks that are robustified in this way show interesting phenomena, which so far could not be explained. Neural networks usually exhibit very unstructured \emph{saliency maps} (gradients of a classifier score with respect to the network's input \citep{simonyan2013deep}) which barely relate to the input image. On the other hand, saliency maps of robustified classifiers tend to be far more interpretable, in that structures in the input image also emerge in the corresponding saliency map, as exemplified in Figure \ref{fig:dog}. \citep{tsipras2018robustness} describe this as an 'unexpected benefit' of adversarial robustness. In order to obtain a semantically meaningful visualization of the network's classification decision in non-robustified networks, the saliency map has to be aggregated over many different points in the vicinity of the input image. This can be achieved either via averaging saliency maps of noisy versions of the image \citep{smilkov2017smoothgrad} or by integrating along a path \citep{sundararajan2017axiomatic}. Other approaches typically employ modified backpropagation schemes in order to highlight the discriminative portions of the image. Examples of this include \emph{guided backpropagation} \citep{DB15a} and \emph{deep Taylor decomposition} \citep{montavon2017explaining}.\newline
In this paper, we show that the interpretability of the saliency maps of a robustified neural network is not only a side-effect of adversarial training, but a general property enjoyed by networks with a high degree of robustness to adversarial perturbations. We first demonstrate this principle for the case of a linear, binary classifier and show that the 'interpretability' is due to the image vector and the respective image gradient aligning. For the more general, non-linear case we empirically show that while this relationship is true on average, the linear theory and the non-linear reality do not always agree. We empirically demonstrate that the more linear the model is, the stronger the connection between robustness and alignment becomes.

\section{Adversarial Robustness and Saliency Maps}
Since adversarial perturbations are small perturbations that change the predicted class of a neural network, it makes sense to define the robustness towards adversarial perturbations via the distance of the unperturbed image to its nearest perturbed image, such that the classification is changed.
\begin{definition}
Let $F: X \rightarrow C$ (with $C$ finite) be a classifier over the normed vector space $(X,\| \cdot \|)$. We call 
\begin{align}
\label{eq:generic_robustness}
    \rho(x)=\inf_{e \in X} \lbrace \|e\| : F(x+e) \neq F(x) \rbrace
\end{align}
the \emph{(adversarial) robustness of $F$ in the point $x$}. We call $\Expectx{\rho(x)}$ the \emph{(adversarial) robustness of $F$ over the distribution $\mathcal{D}$}.
\end{definition}
Put differently, the robustness of a classifier in a point is nothing but the distance to its closest decision boundary. Margin classifiers like support vector machines \citep{cortes1995support} seek to keep this distance large for the training set, usually in order to avoid overfitting. \citep{sokolic2017robust} and \citep{47365} also apply this principle to neural networks via regularization schemes. We point out that our definition of adversarial robustness does not depend on the ground truth class label and -- given feasible computability -- can approximately be calculated even on unlabelled data.\\
In the following, we will always assume $X$ to be a real, finite-dimensional vector space with the Euclidean norm. The proofs for the following theoretical statements are found in the appendix.

\subsection{A Motivating Toy Example}\label{sec:toy_examples}
We consider the toy case of a linear binary classifier $F(x) = \sgn(\Psi_z(x))$ with the so-called score function $\Psi_z(x)=\langle x,z \rangle$ and fixed \mbox{$z\neq 0$}, where $\langle \cdot , \cdot \rangle$ denotes the standard inner product on $\R^m$. A straightforward calculation (see appendix) shows that the adversarial robustness of $F$ is given by
\begin{equation} \label{eq:linear_robustness}
    \rho(x) = \frac{|\langle x,z \rangle |}{\|z\|} = \frac{|\langle x,\nabla \Psi_z(x) \rangle |}{\|\nabla \Psi_z(x)\|}.
\end{equation}
Unless stated otherwise, we will always denote with $\nabla$ the gradient with respect to $x$. Note that \mbox{$\rho(x) = \|x\| \cdot |\cos(\delta) |$}, where $\delta$ is the angle between the vectors $x$ and $\nabla \Psi_z(x)$. This implies that $\rho(x)$ grows with the alignment of $x$ and $z$ and is maximized if and only if $x$ and $z$ are collinear.\newline
This motivates the following definition. 
\begin{definition}[Alignment]
Let the binary classifier $$F: X \rightarrow \{-1,1\}$$ be defined a.e. by $F(x)=\text{\emph{sgn}}(\Psi(x))$, where
$\Psi: X \rightarrow \R$ is differentiable in $x$. We then call $\nabla \Psi$ the \emph{saliency map of $F$ with respect to $\Psi$ in $x$} and \begin{equation} \label{eq:alignment_definition}
    \alpha(x) := \frac{|\langle x,\nabla \Psi(x) \rangle |}{\|\nabla \Psi(x)\|},
\end{equation} the alignment with respect to $\Psi$ in $x$.
\end{definition}
The alignment is a measure of how similar the input image $x$ and the saliency map $\nabla \Psi(x)$ are. If $\|x\|=1$, and $x$ and $\nabla\Psi(x)$ are zero-centered, this coincides with the absolute value of their Pearson correlation. For a linear binary classifier, the alignment trivially increases with the robustness of the classifier.

Generalizing from the linear to the affine case leads to a classifier of the form $F(x) = \sgn (\langle x,z \rangle + b)$, whose robustness in $x$ is $$\rho(x)=\frac{|\langle x,z \rangle + b |}{\|z\|}.$$ In this case the robustness and alignment do not coincide anymore. In order to connect these two diverging concepts, we offer two alternative viewpoints. On the one hand, we can trivially bound the robustness via the triangle inequality
\begin{align}\label{equ:view_one}
    \rho(x) \leq \alpha(x) + \frac{|b|}{\|z\|}.
\end{align}
This is particularly meaningful if $|b|/\|z\|$ is small in comparison to $\alpha(x)$. Alternatively, one can connect the robustness to the alignment at a different point $\xi=x+\frac{b}{\|z\|}\frac{z}{\|z\|}$, leading to the relation
\begin{align}\label{equ:view_two}
    \rho(x) = \alpha(\xi).
\end{align}
In the affine case this approach simply amounts to a shift of the data that is uniform over all data points $x$. We will see how these two viewpoints lead to different bounds in the non-linear case later. 

\subsection{The General Case}\label{sec:general_case}
We now consider the general, $n$-class case.
\begin{definition}[Alignment, Multi-Class Case]
Let $$\Psi=(\Psi^1,\dots,\Psi^n): X \rightarrow \R^n$$ be differentiable in $x$. Then for an $n$-class classifier defined a.e. by \begin{align}
\label{equ:Multi_score}
    F(x) = \argmax_i \Psi^i(x),
\end{align} we call $\nabla \Psi^{F(x)}$ the \emph{saliency map of $F$}. We further call \begin{equation} \label{eq:multi_alignment_definition}
    \alpha(x):=\frac{|\langle x, \nabla \Psi^{F(x)}(x) \rangle|}{\|\nabla \Psi^{F(x)}(x)\|},
\end{equation} the \emph{alignment with respect to $\Psi$ in $x$}.
\end{definition}

\subsubsection{Linearized Robustness}
In general the distance to the decision boundary $\rho(x)$ can be unfeasible to compute. However, for classifiers built on locally affine score functions -- such as most neural networks using ReLU or leaky ReLU activations -- $\rho(x)$ can easily be computed, provided the locally affine region is sufficiently large. To quantify this, define the radius of the locally affine component of $F$ around $x$ as
\begin{align*}
    l(x) = \sup \{ r \ | \ \forall i: \Psi^i \text{ affine in } B_r(x) \},
\end{align*}
where $B_r(x)$ is the open ball of radius $r$ around $x$ with respect to the Euclidean metric.
\begin{restatable}{lemma}{robLemma}
    Let $F$ be a classifier with locally affine score function $\Psi$. Assume $l(x) \geq \rho(x)$. Then
    \begin{align}
        \rho(x) = \min_{j \neq {i^\ast}} \frac{\Psi^{i^\ast}(x) - \Psi^j(x)}{\|\nabla \Psi^{i^\ast}(x) - \nabla \Psi^j(x)\|},
        \label{eq:localaffinerho}
    \end{align}
    for ${i^\ast}:=F(x)$ the predicted class at $x$.
\end{restatable}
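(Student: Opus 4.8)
The plan is to exploit local affinity to replace $F$ near $x$ by a finite intersection of half-spaces, and then reduce the computation of $\rho(x)$ to the elementary geometry of the distance from a point to the boundary of a polytope. Throughout, write $i^\ast = F(x)$.

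\textbf{Step 1 (reduce to the affine region).} Since $l(x) \ge \rho(x)$, every $\Psi^i$ is affine on $B_{l(x)}(x) \supseteq B_{\rho(x)}(x)$, so for $\|e\| < l(x)$ we may write $\Psi^i(x+e) = \Psi^i(x) + \langle \nabla \Psi^i(x), e \rangle$. Hence, away from the null set of points where $\argmax$ is attained twice (which is absorbed by the infimum defining $\rho$), the event $F(x+e) = i^\ast$ is equivalent to the system $\Psi^{i^\ast}(x) - \Psi^j(x) + \langle \nabla\Psi^{i^\ast}(x) - \nabla\Psi^j(x), e \rangle \ge 0$ holding for all $j \ne i^\ast$.

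\textbf{Step 2 (per-class hyperplane distances).} For each $j \ne i^\ast$, the set of $e$ violating the $j$-th inequality is an open half-space bounded by the hyperplane $H_j = \{e : \langle \nabla\Psi^{i^\ast}(x) - \nabla\Psi^j(x), e \rangle = \Psi^j(x) - \Psi^{i^\ast}(x)\}$, which lies at Euclidean distance $d_j := \frac{\Psi^{i^\ast}(x) - \Psi^j(x)}{\|\nabla\Psi^{i^\ast}(x) - \nabla\Psi^j(x)\|}$ from $x$ (i.e.\ from $e = 0$); here $d_j \ge 0$ because $i^\ast$ attains the maximum of $\Psi(\cdot)$ at $x$. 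Set $\tilde\rho(x) := \min_{j \ne i^\ast} d_j$, the right-hand side of \eqref{eq:localaffinerho}.

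\textbf{Step 3 (both inequalities).} For $\rho(x) \ge \tilde\rho(x)$: if $\|e\| < \min\{l(x), \tilde\rho(x)\}$, then Cauchy--Schwarz gives $|\langle \nabla\Psi^{i^\ast}(x) - \nabla\Psi^j(x), e \rangle| < \Psi^{i^\ast}(x) - \Psi^j(x)$ for every $j$, so all inequalities of Step 1 hold strictly and $F(x+e) = i^\ast$; thus no perturbation of norm $< \min\{l(x),\tilde\rho(x)\}$ changes the class. For the reverse, let $j^\ast$ achieve the minimum and let $e^\ast$ be the orthogonal projection of $0$ onto $H_{j^\ast}$, so $\|e^\ast\| = \tilde\rho(x)$; stepping from $e^\ast$ an arbitrarily small amount further along $-(\nabla\Psi^{i^\ast}(x) - \nabla\Psi^{j^\ast}(x))$ yields $e$ with $\Psi^{j^\ast}(x+e) > \Psi^{i^\ast}(x+e)$, hence $F(x+e) \ne i^\ast$, with $\|e\| \to \tilde\rho(x)$; as long as $\tilde\rho(x) < l(x)$ these witnesses stay in the affine ball, so $\rho(x) \le \tilde\rho(x)$. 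Combining the two bounds with the hypothesis $\rho(x) \le l(x)$ forces $\tilde\rho(x) = \rho(x)$, leaving only the degenerate case $\rho(x) = l(x)$, where both sides equal $l(x)$.

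\textbf{Main obstacle.} The half-space/projection geometry is routine; the delicate point is the interplay between the radius $l(x)$ on which the linearization is valid and the radius $\rho(x)$ at which the class changes — one must verify that the near-minimal-norm witnessing perturbation actually lies inside $B_{l(x)}(x)$ (this is precisely what $l(x) \ge \rho(x)$, together with the small-step nudging argument, provides) and that the tie/boundary behavior of $\argmax$ does not spoil the equality. Everything else is a direct computation.
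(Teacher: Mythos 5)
Your proof follows essentially the same route as the paper's: restrict to the affine ball using $l(x)\ge\rho(x)$, rewrite the class-change condition as a system of linear inequalities in $e$, and identify the infimum as the distance to the nearest violating half-space — your Step 3 merely spells out the ``direct computation'' and the two-sided bound that the paper leaves implicit. The only soft spot is the boundary case $\rho(x)=l(x)$, where your assertion that ``both sides equal $l(x)$'' is not actually justified (the right-hand side of \eqref{eq:localaffinerho} could exceed $l(x)$ if the class change occurs only outside the affine region); but the paper's own proof silently assumes this case away when it restricts the infimum to $\|e\|\le l(x)$, so your argument is, if anything, the more careful of the two.
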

Similar identities were previously also independently derived in \citep{47365} and \citep{Jakubovitz_2018_ECCV}.

Note that while nearly all state-of-the art classification networks are piecewise affine, the condition $l(x) \geq \rho(x)$ is typically violated in practice. However, the lemma can still hold \textit{approximately} as long as the linear approximation to the network's score functions is sufficiently good in the relevant neighbourhood of $x$. This motivates the definition of the \emph{linearized (adversarial) robustness} $\tilde{\rho}$.
\begin{definition}[Linearized Robustness] Let $\Psi(x)$ be the  differentiable score vector for the classifier $F$ in $x$. We call
  \begin{align}\label{equ:linearized_robustness}
      \tilde{\rho}(x):= \min_{j \neq {i^\ast}} \frac{\Psi^{i^\ast}(x) - \Psi^j(x)}{\|\nabla \Psi^{i^\ast}(x) - \nabla \Psi^j(x)\|},
  \end{align}
  the \emph{linearized robustness} in $x$, where ${i^\ast}:=F(x)$ is the predicted class at point $x$.
\end{definition}

We later show that the two notions lead to very similar results, even if the condition $l(x) \geq \rho(x)$ is violated.

\subsubsection{Reducing the Multi-Class Case}
In this section, we introduce a toolset which helps bridge the gap between the alignment and the linearized robustness of a multi-class classifier. In the following, for fixed $x$, let $i^\ast:=F(x)$ and $j^\ast$ be the minimizer in \eqref{equ:linearized_robustness}. We can assign $F$ in $x$ a \emph{binarized classifier} $F^\dagger_x$ with
\begin{equation}
    F^\dagger_x(y) := \sgn \left( \Psi^\dagger_x(y) \right),
\end{equation} 
where $\Psi^\dagger_x(y):=\Psi^{i^\ast}(y) - \Psi^{j^\ast}(y)$. Its linearized robustness in $y=x$ is the same as for $F$. The \emph{binarized saliency map}, $\nabla \Psi^\dagger_x(x) = \nabla_y \Psi^\dagger_x(y)|_{y=x}$ and the respective alignment,
\begin{equation}
    \alignment(x) = \frac{|\langle x, \nabla(\Psi^{i^\ast} - \Psi^{j^\ast})(x) \rangle|}{\|\nabla(\Psi^{i^\ast} - \Psi^{j^\ast})(x)\|},
\end{equation}
which we call \emph{binarized alignment}, offer an alternative, natural perspective of the above considerations. This is because for classifiers as defined in \eqref{equ:Multi_score}, the actual score values do not necessarily carry any information about the classification decision, whereas the score differences do. While, roughly speaking, $\nabla \Psi^{i^\ast}$ tells us what $F$ 'thinks' makes $x$ a member of its predicted class, $\nabla \Psi^\dagger_x(x)$ carries information what sets $x$ apart from its closest neighboring class (according to linearization).\newline

In the special case of a linear, multi-class classifier, we have $$\rho(x)=\tilde{\rho}(x)=\alignment(x)$$
and in the linear, binary case \mbox{$\Psi(x)=(\langle x,z \rangle , -\langle x,z \rangle)$}, even $$\alpha(x) = \alignment(x).$$

\section{Decompositions and Bounds for Neural Networks}

\subsection{Homogeneous Decomposition}
In the previous chapter we have seen that in the case of binary classifiers, the robustness and binarized alignment coincide for linear score functions. However, requiring $\Psi$ to be linear is a stronger assumption than necessary to deduce the result: It is in fact sufficient for $\Psi$ to be \textit{positive one-homogeneous}. Any such function satisfies $\Psi(a x) = a \Psi(x)$ for all $a>0$ and $x$. 

\begin{restatable}[Linearized Robustness of Homogeneous Classifiers]{lemma}{homoAlign}
    Consider a classifier $F$ with positive one-homogeneous score functions. Then
    \begin{align}
        \tilde{\rho}(x) = \alignment(x).
    \end{align}
\end{restatable}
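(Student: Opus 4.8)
The plan is to reduce everything to \emph{Euler's homogeneous function theorem}: if $g : X \to \R$ is positive one-homogeneous and differentiable at $x$, then $\langle x, \nabla g(x) \rangle = g(x)$. For completeness I would include the one-line proof, namely differentiating the identity $g(tx) = t\,g(x)$ with respect to $t$ and evaluating at $t = 1$. Applying this to each score function $\Psi^i$ — which we assume differentiable at $x$, as is needed for the saliency map to be defined at all — gives $\langle x, \nabla \Psi^i(x) \rangle = \Psi^i(x)$ for every class $i$, and hence by bilinearity of the inner product $\langle x, \nabla(\Psi^{i^\ast} - \Psi^j)(x) \rangle = \Psi^{i^\ast}(x) - \Psi^j(x)$ for every $j$.

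Now fix $x$ and set $i^\ast := F(x)$. Since $i^\ast$ is by definition a maximizer of $i \mapsto \Psi^i(x)$, each difference $\Psi^{i^\ast}(x) - \Psi^j(x)$ is non-negative, so the identity above may be written with an absolute value on the right-hand side without changing its value. Substituting term by term into the definition of $\tilde{\rho}$ yields
\[
\tilde{\rho}(x) = \min_{j \neq i^\ast} \frac{\Psi^{i^\ast}(x) - \Psi^j(x)}{\|\nabla \Psi^{i^\ast}(x) - \nabla \Psi^j(x)\|} = \min_{j \neq i^\ast} \frac{|\langle x, \nabla(\Psi^{i^\ast} - \Psi^j)(x) \rangle|}{\|\nabla(\Psi^{i^\ast} - \Psi^j)(x)\|}.
\]
Each term on the right is precisely the binarized alignment of $F$ in $x$ against the competing class $j$; in particular the minimizing index is exactly the $j^\ast$ used in the definition of $\alignment(x)$. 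Reading off the $j^\ast$-th term therefore gives $\tilde{\rho}(x) = \alignment(x)$, which is the claim. (The degenerate case $x = 0$ is trivial, since then both sides vanish, and the case where the relevant denominator is zero is already excluded by the definitions.)

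I do not expect a real obstacle here: the argument is essentially a single substitution once Euler's identity is in hand. The only points requiring a little care are (i) the implicit assumption that the score functions are differentiable at $x$, which is already needed for $\nabla \Psi^{i^\ast}$ to make sense, and (ii) the sign bookkeeping that lets the non-negative numerator of $\tilde{\rho}$ be matched with the absolute value appearing in the alignment. The place a reader is most likely to want more detail is the statement of Euler's theorem, so I would either cite it or spell out the short differentiation argument; everything after that is immediate.
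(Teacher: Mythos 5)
Your proof is correct and follows essentially the same route as the paper: the paper's own proof is a one-line appeal to Euler's homogeneous function theorem ($f(x) = \langle \nabla f(x), x\rangle$), and you simply spell out the substitution into the definition of $\tilde{\rho}$, the sign observation that $\Psi^{i^\ast}(x) - \Psi^j(x) \geq 0$, and the identification of the minimizer with the $j^\ast$ appearing in the definition of $\alignment$ — all of which the paper leaves implicit.
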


In particular, most feedforward neural networks with (leaky) ReLU activations \emph{without biases} are positive one-homogeneous. This observation motivates to split up any classifier built on neural networks into a homogeneous term and the corresponding remainder, leading to the following decomposition result.

\begin{restatable}[Homogeneous Decomposition of Neural Networks]{theorem}{homoDec}
Let $\Psi_{\Theta, b}^{i}$ be any logit of a neural network with ReLU activations (of class $\mathcal{N}$ in the appendix). Denote by $\Theta$ the linear filters and by $b$ the bias terms of the network. Then 
\begin{align}
\begin{split}
    \Psi_{\Theta, b}^i(x) \ &= \langle x, \nabla_x \Psi_{\Theta, b}^i(x) \rangle + \langle b, \nabla_b \Psi_{\Theta, b}^i(x) \rangle
    \\
    &= \langle x, \nabla_x \Psi_{\Theta, b}^i(x) \rangle + \sum_k b_k \partial_{b_k} \Psi_{\Theta, b}^i(x).
\end{split}
\end{align}
\end{restatable}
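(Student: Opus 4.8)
The plan is to observe that each logit $\Psi^i_{\Theta,b}$, viewed as a function of the pair $(x,b)$ with the linear filters $\Theta$ held fixed, is positive one-homogeneous, and then to apply Euler's theorem for homogeneous functions exactly as in the proof that linear (or homogeneous) score functions satisfy $\tilde\rho = \alignment$.

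First I would set up the feedforward recursion: write $a^{(0)} = x$ and $a^{(\ell)} = \sigma\big(W^{(\ell)} a^{(\ell-1)} + b^{(\ell)}\big)$ for $\ell = 1,\dots,L$, where $\sigma$ is the (leaky) ReLU applied componentwise, $W^{(\ell)}$ are the filters making up $\Theta$, $b^{(\ell)}$ are the biases making up $b$, and $\Psi^i_{\Theta,b}(x)$ is the $i$-th coordinate of $a^{(L)}$ (with $\sigma$ taken to be the identity on the final layer if the logits come from a purely affine map). The elementary fact driving everything is that $\sigma$ is positive one-homogeneous, $\sigma(tz)=t\,\sigma(z)$ for $t>0$, together with the identity $W^{(\ell)}(t a^{(\ell-1)}) + (t b^{(\ell)}) = t\big(W^{(\ell)} a^{(\ell-1)} + b^{(\ell)}\big)$. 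An induction on $\ell$ then gives $a^{(\ell)}\big(tx, tb^{(1)},\dots,tb^{(\ell)}\big) = t\, a^{(\ell)}\big(x,b^{(1)},\dots,b^{(\ell)}\big)$ for all $t>0$, the base case $\ell=0$ being immediate since $a^{(0)}=x$ carries no bias dependence. Reading off coordinate $i$ at $\ell = L$ yields
\[
\Psi^i_{\Theta,\, tb}(tx) \;=\; t\,\Psi^i_{\Theta,\, b}(x)\qquad\text{for all } t>0,
\]
i.e. $\Psi^i_{\Theta,b}$ is positive one-homogeneous jointly in $(x,b)$.

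Next I would differentiate this identity in $t$ and set $t=1$. Wherever $\Psi^i_{\Theta,b}$ is differentiable in $(x,b)$, the chain rule gives
\[
\langle x, \nabla_x \Psi^i_{\Theta,b}(x)\rangle + \langle b, \nabla_b \Psi^i_{\Theta,b}(x)\rangle \;=\; \Psi^i_{\Theta,b}(x),
\]
and expanding the second inner product coordinatewise produces the displayed sum $\sum_k b_k\,\partial_{b_k}\Psi^i_{\Theta,b}(x)$. Equivalently, once homogeneity is established one can simply cite Euler's homogeneous function theorem.

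The main obstacle, and essentially the only delicate point, is differentiability: ReLU networks are merely piecewise affine, so $\Psi^i$ fails to be differentiable on the union of the activation-boundary hyperplanes, and the scaling curve $t\mapsto(tx,tb)$ must be argued to stay in a single activation region for $t$ near $1$ (true for generic $(x,b)$, and automatic for leaky ReLU away from a null set). This is presumably exactly what the class $\mathcal{N}$ in the appendix is set up to encode — the admissible architectures and the restriction to points of differentiability under which the one-line induction plus Euler identity above is valid. A secondary, purely bookkeeping concern is to make sure $b$ is understood to collect \emph{all} bias parameters across \emph{all} layers, that no non-bias parameters are swept into it, and that the final layer is treated correctly whether or not it carries a nonlinearity; once these conventions are fixed the argument is complete.
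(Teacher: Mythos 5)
Your proposal is correct and is essentially the paper's argument: the paper also reduces the claim to Euler's homogeneous function theorem, packaging the joint homogeneity $\Psi^i_{\Theta,tb}(tx)=t\,\Psi^i_{\Theta,b}(x)$ as a single-variable homogeneous function $y\mapsto \Psi^i_{\Theta,\,b\|y\|/\|x\|}(y)$ whose gradient at $y=x$ it computes by the chain rule, which unwinds to exactly your joint Euler identity. Your version is marginally more direct, and you are more careful than the paper about the differentiability caveat for piecewise-affine networks; just note that the induction should also cover the skip-connection, pooling and batch-normalization operations admitted by the class $\mathcal{N}$, all of which are positive one-homogeneous.
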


Note that the above vector $b$ includes the running averages of the means for batch normalization. For ReLU networks, the remainder term $\beta^i(x):=\langle b, \nabla_b \Psi_{\Theta, b}^i(x) \rangle$ is locally constant, because it changes only when $x$ enters another locally linear region. For ease of notation, we will now drop the subscripts $\Theta$ and $b$.
\subsection{Pointwise Bounds}\label{sec:bounds}
In section \ref{sec:toy_examples}, we introduced two different viewpoints for affine linear, binary classifiers which connect the robustness to the alignment. In a similar vein to inequality \eqref{equ:view_one} and equality \eqref{equ:view_two}, upper bounds to the linearized robustness depending on the alignment can be given for neural networks. In the following, we will write $\overline{v}:=v/\|v\|$ for $v\neq 0$. Again, in the following we fix $x$ and write $i^\ast=F(x)$ and $j^\ast$ for the minimizer in $j$ from equation \eqref{equ:linearized_robustness}.

\begin{restatable}[]{theorem}{boundOne}
\label{thm:x_bounds} \label{thm:view_one}
Let $g:= \nabla \Psi^{i^\ast} (x)$. Furthermore, let  \mbox{$g^\dagger:= \nabla (\Psi^{i^\ast} - \Psi^{j^\ast})(x)$} and $\beta^\dagger:=\beta^{i^\ast}(x) - \beta^{j^\ast}(x)$. Then
\begin{align}
    \tilde{\rho}(x) &\leq \alignment(x) + \frac{|\beta^{\dagger}|}{\|g^{\dagger}\|} \label{eq:theorem2a}\\
    &\leq \alpha(x) + \| x\| \cdot \| \overline{g}^{\dagger}-\overline{g}\| + \frac{|\beta^{\dagger}|}{\|g^{\dagger}\|}.\label{eq:theorem2b}
\end{align}
\end{restatable}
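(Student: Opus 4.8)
The plan is to feed the homogeneous decomposition of Theorem~\ref{thm:x_bounds} — i.e.\ the preceding \emph{Homogeneous Decomposition of Neural Networks} theorem — into two elementary inequalities, one for each line of the statement. First I would apply that decomposition to the two relevant logits $\Psi^{i^\ast}$ and $\Psi^{j^\ast}$ separately, writing $\Psi^i(x) = \langle x, \nabla_x \Psi^i(x)\rangle + \beta^i(x)$, and subtract to obtain
\begin{align}
    \Psi^{i^\ast}(x) - \Psi^{j^\ast}(x) = \langle x, g^\dagger \rangle + \beta^\dagger .
\end{align}
Since $j^\ast$ is by definition the minimizer in \eqref{equ:linearized_robustness}, dividing by $\|g^\dagger\| > 0$ identifies the left-hand side with $\tilde\rho(x)$. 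Using $t \leq |t|$ and then the triangle inequality $|\langle x, g^\dagger\rangle + \beta^\dagger| \leq |\langle x, g^\dagger\rangle| + |\beta^\dagger|$ and dividing through by $\|g^\dagger\|$ yields
\begin{align}
    \tilde\rho(x) \leq \frac{|\langle x, g^\dagger\rangle|}{\|g^\dagger\|} + \frac{|\beta^\dagger|}{\|g^\dagger\|} = \alignment(x) + \frac{|\beta^\dagger|}{\|g^\dagger\|},
\end{align}
which is \eqref{eq:theorem2a}.

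For the second inequality \eqref{eq:theorem2b}, I would rewrite the two alignments in normalized form, $\alignment(x) = |\langle x, \overline{g}^\dagger\rangle|$ and $\alpha(x) = |\langle x, \overline{g}\rangle|$, using $\overline{v} = v/\|v\|$. Then the reverse triangle inequality on absolute values, $\big|\,|\langle x,\overline g^\dagger\rangle| - |\langle x,\overline g\rangle|\,\big| \leq |\langle x, \overline g^\dagger - \overline g\rangle|$, followed by Cauchy--Schwarz, gives $\alignment(x) \leq \alpha(x) + \|x\|\cdot\|\overline{g}^\dagger - \overline{g}\|$. Adding $|\beta^\dagger|/\|g^\dagger\|$ to both sides and chaining with \eqref{eq:theorem2a} produces \eqref{eq:theorem2b}.

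The computation itself is routine; the only points requiring care are bookkeeping ones. I need the homogeneous decomposition theorem to hold \emph{logit-wise} so that it applies to both $\Psi^{i^\ast}$ and $\Psi^{j^\ast}$ (this is exactly how that theorem is phrased, for ``any logit'' of a class-$\mathcal{N}$ network), and I need well-definedness of the quantities involved — namely $g^\dagger = \nabla(\Psi^{i^\ast}-\Psi^{j^\ast})(x) \neq 0$, which is already implicit in the definition of $\tilde\rho(x)$, and $g = \nabla\Psi^{i^\ast}(x)\neq 0$, implicit in the definition of $\alpha(x)$. One should also note that $\tilde\rho$ carries no absolute value in its numerator, so the step $\tilde\rho(x)\le|\tilde\rho(x)|$ is trivial but is what lets the triangle inequality in; no sign assumption on $\beta^\dagger$ or $\langle x,g^\dagger\rangle$ is needed. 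I do not anticipate a genuine obstacle here — the content of the theorem is precisely that the homogeneous decomposition turns the definition of $\tilde\rho$ into something the triangle inequality and Cauchy--Schwarz can act on.
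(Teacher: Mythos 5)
Your proposal is correct and follows essentially the same route as the paper's own proof: apply the homogeneous decomposition to $\Psi^{i^\ast}-\Psi^{j^\ast}$, identify the result with the numerator of $\tilde\rho(x)$, and use the triangle inequality for \eqref{eq:theorem2a}, then an add-and-subtract of $\overline{g}$ together with Cauchy--Schwarz for \eqref{eq:theorem2b}. Your reverse-triangle-inequality phrasing of the second step is just a repackaging of the paper's $\left|\langle x, \overline{g}^\dagger - \overline{g} + \overline{g}\rangle\right| \leq \left|\langle x,\overline{g}\rangle\right| + \left|\langle x, \overline{g}^\dagger - \overline{g}\rangle\right|$, so there is no substantive difference.
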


Distances on the unit sphere (such as $\| \overline{g}^{\dagger}-\overline{g}\|$) can be converted to angles through the law of cosines. For the above inequalities to be reasonably tight, the angle between $g$ and $g^\dagger$ needs to be small and $|\beta^\dagger|/\|g^\dagger\|$ needs to be small in comparison to $\alpha^\dagger(x)$. In this case, the alignment should roughly increase with the linearized robustness.

\begin{restatable}[]{theorem}{boundTwo}
\label{thm:xi_bounds} Let $\xi:= x + \tfrac{\beta^\dagger}{\|g^\dagger\|}\tfrac{g^\dagger}{\|g^\dagger\|}$ and $\gamma:=\nabla \Psi^{i^\ast}(\xi)$, with $g^\dagger$ and $\beta^\dagger$ defined as in the previous theorem. Then
\begin{equation}
    \begin{aligned}\label{eq:theorem3a}
        \tilde{\rho}(x) &\leq \frac{|\langle \xi, \gamma \rangle|}{ \|\gamma\|} + \| \xi \| \cdot \|\overline{g}^{\dagger} - \overline{\gamma} \|,
    \end{aligned}
\end{equation}
and if additionally $F(x)=F(\xi),$ then
\begin{equation*}
    \begin{aligned}
        \tilde{\rho}(x) & \leq\alpha(\xi) + \| \xi \| \cdot \|\overline{g}^{\dagger} - \overline{\gamma} \|.\label{eq:theorem3b}
    \end{aligned}
\end{equation*}
\end{restatable}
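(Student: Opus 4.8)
The plan is to collapse $\tilde{\rho}(x)$ to a single inner product evaluated at $\xi$ and then peel off the two error terms with the triangle inequality and Cauchy--Schwarz. First I would use the definition \eqref{equ:linearized_robustness} together with the minimizer $j^\ast$ to write $\tilde{\rho}(x) = \bigl(\Psi^{i^\ast}(x) - \Psi^{j^\ast}(x)\bigr) / \|g^\dagger\|$, and then apply the Homogeneous Decomposition of Neural Networks to each of the two logits, which gives $\Psi^{i^\ast}(x) - \Psi^{j^\ast}(x) = \langle x, g^\dagger \rangle + \beta^\dagger$ with $\beta^\dagger = \beta^{i^\ast}(x) - \beta^{j^\ast}(x)$, exactly as in Theorem~\ref{thm:view_one}.

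The key step --- and the reason $\xi$ is defined the way it is --- is the identity $\langle \xi, g^\dagger \rangle = \langle x, g^\dagger \rangle + \tfrac{\beta^\dagger}{\|g^\dagger\|^2}\langle g^\dagger, g^\dagger \rangle = \langle x, g^\dagger \rangle + \beta^\dagger$, so that $\tilde{\rho}(x) = \langle \xi, \overline{g}^{\dagger} \rangle$; since $\tilde{\rho}(x) \geq 0$ this equals $|\langle \xi, \overline{g}^{\dagger} \rangle|$. From here the first bound is immediate: write $\overline{g}^{\dagger} = \overline{\gamma} + (\overline{g}^{\dagger} - \overline{\gamma})$, apply the triangle inequality to split $|\langle \xi, \overline{g}^{\dagger} \rangle|$ into $|\langle \xi, \overline{\gamma} \rangle| + |\langle \xi, \overline{g}^{\dagger} - \overline{\gamma} \rangle|$, and bound the second term by $\|\xi\| \cdot \|\overline{g}^{\dagger} - \overline{\gamma}\|$ with Cauchy--Schwarz, noting $|\langle \xi, \overline{\gamma} \rangle| = |\langle \xi, \gamma \rangle| / \|\gamma\|$.

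For the refined inequality, observe that if $F(x) = F(\xi)$ then $i^\ast = F(\xi)$, so $\gamma = \nabla \Psi^{i^\ast}(\xi) = \nabla \Psi^{F(\xi)}(\xi)$ is precisely the saliency map of $F$ at $\xi$, whence $|\langle \xi, \gamma \rangle| / \|\gamma\| = \alpha(\xi)$ by \eqref{eq:multi_alignment_definition}; substituting this into the first bound finishes the proof.

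I do not expect a serious obstacle here --- the argument is short once the identity $\langle \xi, g^\dagger \rangle = \Psi^{i^\ast}(x) - \Psi^{j^\ast}(x)$ is in hand. The only points requiring a little care are the standing nonvanishing assumptions $g^\dagger \neq 0$ and $\gamma \neq 0$ (needed for the normalizations), the fact that the decomposition theorem is applied logit-wise and then differenced, the sign bookkeeping that replaces $\langle \xi, \overline{g}^{\dagger} \rangle$ by its absolute value, and --- worth an explicit sentence --- why $F(x) = F(\xi)$ is exactly the hypothesis under which $\nabla \Psi^{i^\ast}(\xi)$ may be identified with the saliency map $\nabla \Psi^{F(\xi)}(\xi)$; without it only the alignment-free bound \eqref{eq:theorem3a} survives.
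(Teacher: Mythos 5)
Your proposal is correct and follows essentially the same route as the paper's own proof: decompose $\Psi^{i^\ast}(x)-\Psi^{j^\ast}(x)=\langle x,g^\dagger\rangle+\beta^\dagger$, absorb $\beta^\dagger$ into the shifted point to get $\tilde{\rho}(x)=\langle\xi,\overline{g}^\dagger\rangle$, then split off $\overline{\gamma}$ via the triangle inequality and Cauchy--Schwarz. If anything, your write-up is slightly more careful than the paper's (which omits the sign remark justifying the absolute value and does not spell out why $F(x)=F(\xi)$ turns the first bound into the $\alpha(\xi)$ bound), so no changes are needed.
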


Depending on the sign of $\beta^\dagger$, the shifted image $\xi$ can either be understood as a gradient ascent or descent iterate for maximizing/minimizing $\Psi^{i^\ast} - \Psi^{j^\ast}$. This theorem assimilates $\beta^\dagger(x)$ into $x$, providing an upper bound to $\tilde{\rho}(x)$ that depends on $\alpha(\xi)$. The sensibility of this hinges on $\xi$ being reasonably close to $x$ and $\gamma$ having a low angle with $g^\dagger$. 

If the error terms in inequalities \eqref{eq:theorem2a}, \eqref{eq:theorem2b} and \eqref{eq:theorem3a} are small, these inequalities thus provide a simple illustration why more robust networks yield more interpretable saliency maps.

Nevertheless, the right-hand side may be much larger than $\tilde{\rho}(x)$, if the inner product between an image and its respective saliency map are almost orthogonal. This is because the Cauchy-Schwarz inequality (see the proofs in the appendix) provides a large upper bound in this case. The inequalities rather serve as an explanation of how the various terms of alignment may deviate from the linearized robustness in the case of a neural network.

\subsection{Alignment and Interpretability}
The above considerations demonstrate how an increase in robustness may induce an increase in the alignment between an input image and its respective saliency map. The initial observation -- which was previously described as an increase in \emph{interpretability} -- may thus be ascribed to this phenomenon. This is especially true in the case of natural images, as exemplified in Figure \ref{fig:dog}. There, what a human observer would deem an increase in interpretability, expresses itself as discriminative portions of the original image reappearing in the saliency map, which naturally implies a stronger alignment. The concepts of alignment and interpretability should however not be conflated completely: In the case of quasi-binary image data like MNIST, 0-regions of the image render the inner product in equation \eqref{eq:multi_alignment_definition} invariant with respect to the saliency map in this region, even if the saliency map e.g. assigns relevance to the absence of a feature in this region. Note however that the saliency map in this region still influences the alignment term through the division by its norm. Additionally, the alignment is also not invariant to the images' representation (color space, shifts, normalization etc.). Still, for most types of image data an increase in alignment in discriminative regions should coincide with an increase in interpretability.

\section{Experiments}\label{sec:experiments}
In order to validate our hypothesis, we trained several models of different adversarial robustness on both MNIST \citep{lecun1990handwritten} and ImageNet \citep{deng2009imagenet} using double backpropagation \citep{drucker1992improving}. For a neural network $f_\theta$ with a softmax output layer, this amounts to minimizing the modified loss
\begin{equation}
    \frac{1}{N} \sum\limits_{i=1}^{N} \left[ \loss(f_\theta(x^{(i)}),y^{(i)}) + \lambda \cdot \| \nabla \loss(f_\theta(x^{(i)}),y^{(i)}) \|^2 \right]
\end{equation}
over the parameters $\theta=(\Theta,b)$. Here, $\lbrace (x^{(i)},y^{(i)})\rbrace_{i=1,\dots,N}$ is the training set and $\loss$ denotes the negative log-likelihood error function. The hyperparameter $\lambda \geq 0$ determines the strength of the regularization. Note that this penalizes the local Lipschitz constant of the loss. As \citep{simon2018adversarial} demonstrate, double backpropagation makes neural networks more resilient to adversarial attacks. By varying $\lambda$, we can easily create models of different adversarial robustness for the same dataset, whose properties we can then compare. \citep{anil2018sorting} previously noted that Lipschitz constrained networks exhibit interpretable saliency maps (without an explanation), which can be regarded as a side-effect of the increase in adversarial robustness.\newline
For the MNIST experiments, we trained each of our 16 models on an NVIDIA 1080Ti GPU with a batch size of 100 for 200 epochs, covering the regularization hyperparameter range from 10 to 180,000, before the models start to degenerate. The used architecture is found in the appendix.\newline
For the experiments on ImageNet, we fine-tuned the pre-trained ResNet50 model from \citep{he2016deep} over 35 epochs on 2 NVIDIA P100 GPUs with a total batch size of 32. We used stochastic gradient descent with a learning rate of 0.0001 and momentum of 0.99. The learning rate was divided by 10 whenever the error stopped improving. For the regularization parameter, we chose $\lambda=10^4,10^{4.5},\dots,10^7$. The experiments were implemented in Tensorflow \citep{tensorflow2015-whitepaper}.

\begin{figure}[t]
    \centering
    \includegraphics[width=.95\linewidth]{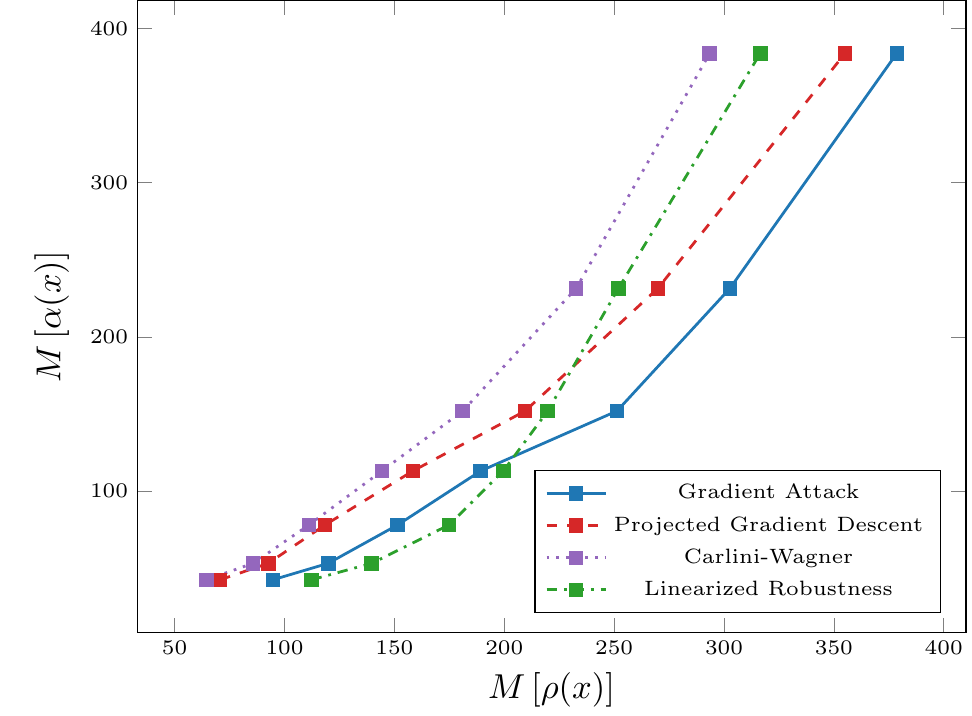}
    \caption{The median alignment increases with the median robustness of the model on ImageNet. Furthermore, the more elaborate attacks consistently find smaller adversarial perturbations than the simple gradient attack. The linearized robustness estimator provides a rather realistic estimation of the algorithmically calculated robustness.}
    \label{fig:alphax_vs_robustness_imagenet}
\end{figure}
\begin{figure}[t]
    \centering
    \includegraphics[width=.95\linewidth]{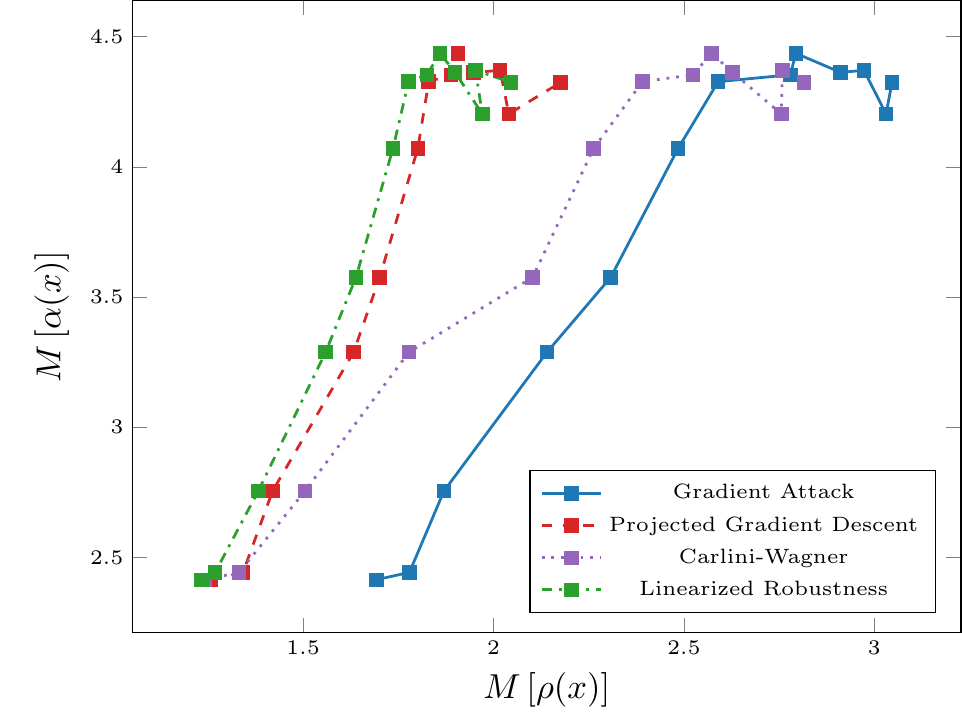}
    \caption{Similar to Figure \ref{fig:alphax_vs_robustness_imagenet}, the median alignment increases with the median robustness of the model on MNIST. Towards the end, some saturation effects are visible.}
    \label{fig:alphax_vs_robustness_mnist}
\end{figure}

\subsection{Robustness and Alignment}
For checking the relation between the alignment and robustness of a neural network, we created 1000 adversarial examples per model on the respective validation set. This was realized using the python library \emph{Foolbox} \citep{rauber2017foolbox}, which offers pre-defined adversarial attacks, three of which we used in this paper: The \texttt{GradientAttack} performs a line search for the closest adversarial example along the direction of the loss gradient. \texttt{L2BasicInterativeAttack} implements the projected gradient descent attack from \citep{kurakin2016adversarial} for the Euclidean metric. Similarly, \texttt{CarliniWagnerL2Attack} (\emph{CW-attack}) is the attack introduced in \citep{carliniwagner} suited for finding the closest adversarial example in Euclidean metric. Additionally, we calculated the linearized robustness $\tilde{\rho}(x)$, which entails calculating $n$ gradients per image for an $n$-class problem.
\\
In Figures \ref{fig:alphax_vs_robustness_imagenet} and \ref{fig:alphax_vs_robustness_mnist}, we investigate how the median alignment depends on  the medians of the different conceptions of robustness. We opted in favor of the median ($M$) instead of the arithmetic mean due to its increased robustness to outliers, which occurred especially when using the gradient attack. In the case of ImageNet (Figure \ref{fig:alphax_vs_robustness_imagenet}), an increase in median alignment with the median robustness is clearly visible for all three estimates of the robustness. On the other hand, the alignment for the MNIST data increases with the robustness as well, but seems to saturate at some point. We will offer an explanation for this phenomenon later.

We now consider the pointwise connection between robustness and alignment. In Figure \ref{fig:scatter_rho_alpha} the two variables are highly-correlated for a model trained on MNIST, pointing towards the fact that the network behaves very similarly to a positive one-homogeneous function. There is however no visible correlation between them on the ImageNet model, which is a consistent behavior throughout the whole experiment cohort. We will later analyse the source of this behavior. The increase in median alignment for ImageNet,  $M\left[\alpha(x)\right]=M\left[ |\langle x,\overline{g}\rangle|\right]$, can still be explained by a statistical argument: If $M\left[ \langle x, \overline{g}\rangle \right]=0$, as approximately true in our ImageNet model, then $M\left[ \alpha(x)\right]$ is the median absolute deviation of $\langle x,\overline{g}\rangle$. In other words, the graph for ImageNet in Figure \ref{fig:scatter_rho_alpha} depicts the dispersion of $\langle x, \overline{g}\rangle$. The above observations also hold well for the binarized alignment.\\
\begin{figure}
    \centering
    \includegraphics[width=.48\linewidth]{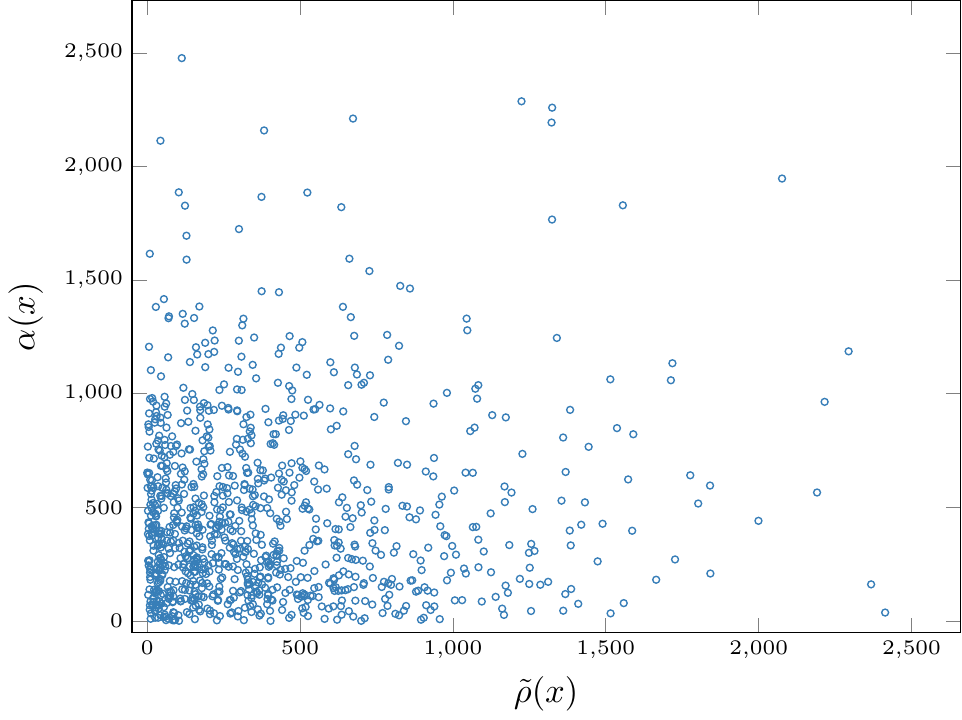}\hspace{10pt}\includegraphics[width=.46\linewidth]{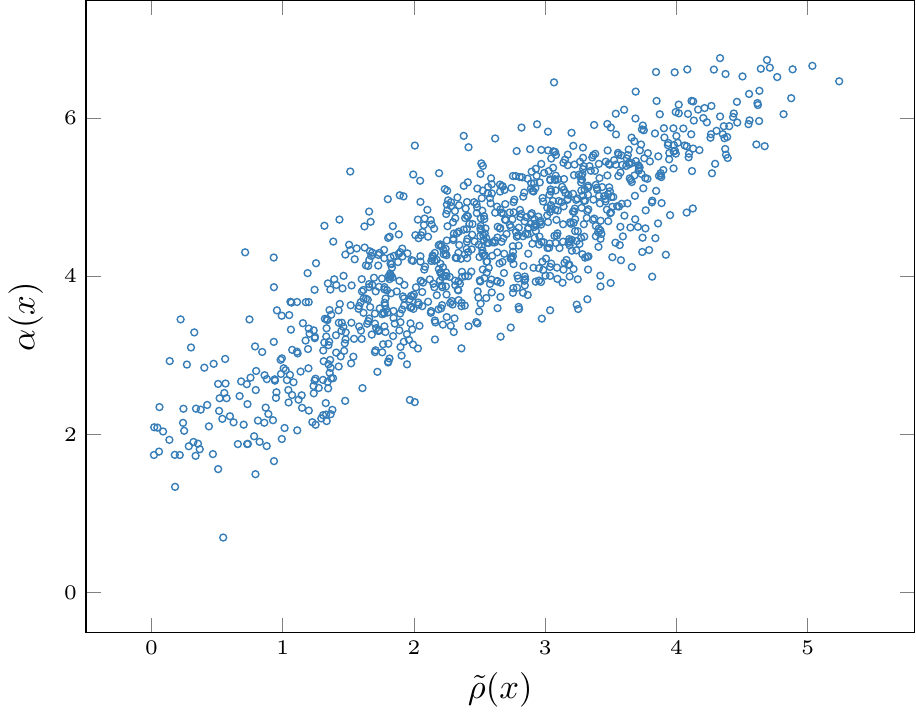}
    \caption{The pointwise relationship between $\tilde{\rho}(x)$ and $\alpha(x)$, exemplified on a model trained on ImageNet (left) and MNIST (right). While the two properties are well-correlated on MNIST (fitting the 'averaged' view from Figure \ref{fig:alphax_vs_robustness_mnist}), there is no visible correlation in the case of ImageNet.} 
    \label{fig:scatter_rho_alpha}
\end{figure}

In Figure \ref{fig:scatter_rho_rho} a tight correlation between $\tilde{\rho}(x)$ and $\rho(x)$ becomes evident. Here, the latter has been calculated using the CW-attack. The linearized robustness model $\tilde{\rho}$ is hence an adequate approximation of the actual robustness $\rho$, even for the highly non-linear neural network models used on ImageNet. Finally note that all used attacks lead to the same general behavior of all quantities investigated (see Figures \ref{fig:alphax_vs_robustness_imagenet} and \ref{fig:alphax_vs_robustness_mnist}).

\begin{figure}
    \centering
    \includegraphics[width=.48\linewidth]{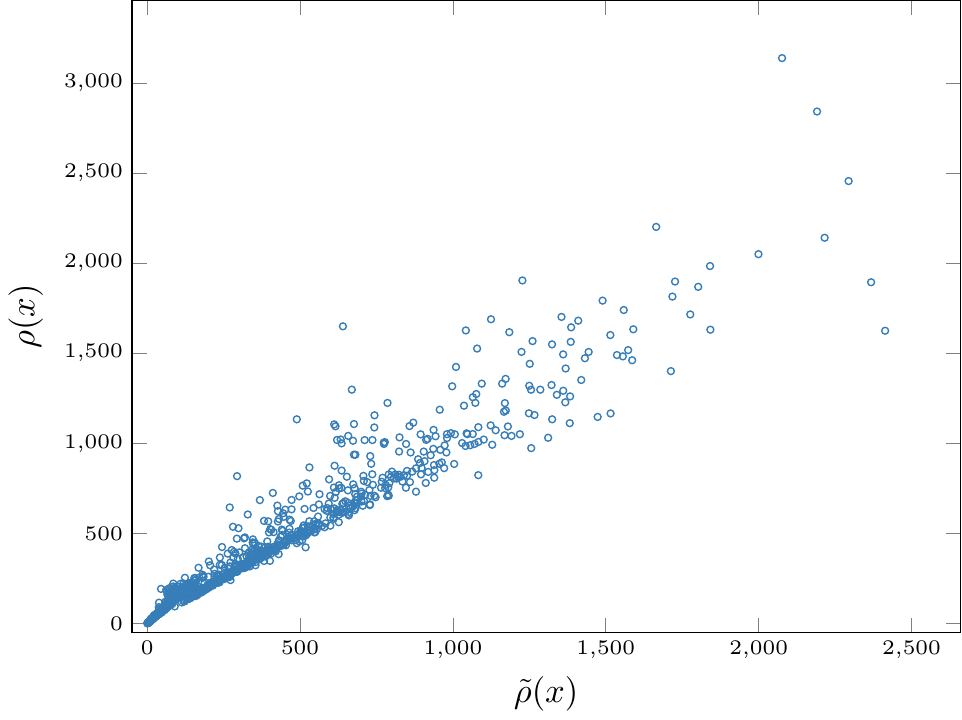}\hspace{10pt}\includegraphics[width=.46\linewidth]{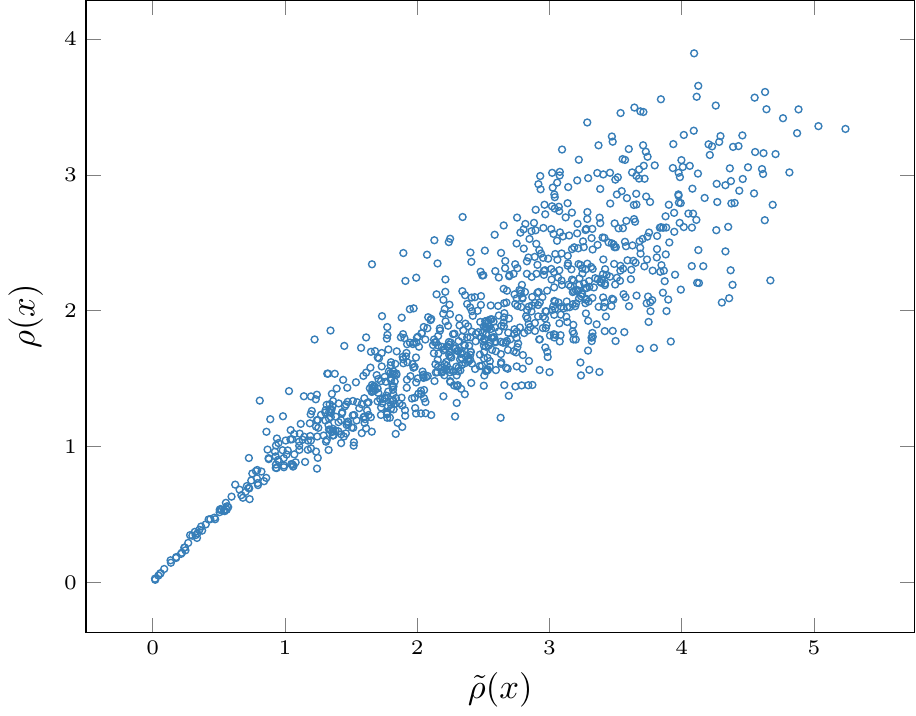}
    \caption{The pointwise relationship between $\tilde{\rho}(x)$ and $\rho(x)$, each calculated for 1000 validation points on a model trained on ImageNet (left) and MNIST (right). $\rho(x)$ was approximately calculated using the CW-attack. In both cases, the correlation is high.} 
    \label{fig:scatter_rho_rho}
\end{figure}
\subsection{Explaining the Observations}
In the last section, we observed some commonalities between the experiments on ImageNet and MNIST, but also some very different behaviors. In particular, two aspects stand out: Why does the median alignment steadily increase for the observed ImageNet experiments, whereas on MNIST this stagnates at some point (Figures \ref{fig:alphax_vs_robustness_imagenet} and \ref{fig:alphax_vs_robustness_mnist})? Furthermore, why are $\tilde{\rho}(x)$ and $\alpha(x)$ so highly-correlated on MNIST but almost uncorrelated on ImageNet (Figure \ref{fig:scatter_rho_alpha})? We turn to Theorems \ref{thm:x_bounds} and \ref{thm:xi_bounds} for answers.\\
Theorem \ref{thm:x_bounds} states that
\begin{equation}\label{eq:2arepeat}
    \tilde{\rho}(x)\leq \alignment(x) + \frac{|\beta^\dagger|}{\|g^\dagger\|},
\end{equation}
where $\beta^\dagger$ is the locally constant term and  $g^\dagger$ is the saliency map of the binarized classifier and $\overline{v}=v/\|v\|$ for $v\neq 0$.
In Figure \ref{fig:theorem2a}, we check how strongly the right-hand side of inequality \eqref{eq:2arepeat} is dominated by $\alignment(x)$, i.e. how large the influence of the locally linear term is in comparison to the locally constant term. For ImageNet, this ratio increases from below 0.55 to almost 0.85, pointing towards a model increasingly governed by its linearized part. On MNIST, this ratio strongly decreases over the robustness's range. Note however that in the weakly regularized MNIST models, the right hand side is extremely dominated by the median alignment in the first place.

A similar analysis can be performed for the second inequality from Theorem \ref{thm:x_bounds}, 
\begin{equation}\label{eq:2brepeat}
    \tilde{\rho}(x)\leq \alpha(x) + \|x\|\cdot\|\overline{g}^\dagger - \overline{g}\| + \frac{|\beta^\dagger|}{\|g^\dagger\|},
\end{equation}
which additionally makes a step from binarized alignment to (conventional) alignment. 

This leads to an additional error term, making the bound significantly less tight than in the previous case. In particular, the proportion of the alignment $\alpha$ on the right-hand side diminishes, confirming our prediction from section \ref{sec:bounds}. Nevertheless, the qualitative behaviors is similar to the previous case, with the $\alpha(x)$ taking up an increasing fraction of the right-hand with increasing robustness.
For MNIST data, the ratio varies little compared to the ratio from the last inequality. This indicates that the remainder term $\|\overline{g}^\dagger - \overline{g}\|$ does not change too strongly over the set of MNIST experiments compared to $\alpha(x)$. We thus deduce that the qualitative relationship between robustness and alignment is fully governed by the error term introduced in \eqref{eq:2arepeat}, i.e. the locally constant term of the logit.

We now do the same for the inequality in Theorem \ref{thm:xi_bounds}, which states that
\begin{equation}
    \begin{aligned}\label{eq:3arepeat}
        \tilde{\rho}(x) &\leq \frac{|\langle \xi, \gamma \rangle|}{ \|\gamma\|} + \| \xi \| \cdot \|\overline{g}^{\dagger} - \overline{\gamma} \|
    \end{aligned}
\end{equation}
for $\xi = x+\tfrac{b^\dagger}{\|g^\dagger\|}\tfrac{g^\dagger}{\|g^\dagger\|}$ and $\gamma=\nabla \Psi^{F(x)}(\xi)$, which gets rid of the additive term $|\beta^\dagger|/\|g^\dagger\|$ from \eqref{eq:2arepeat}. Again, in the case of ImageNet ${|\langle \xi, \overline{\gamma} \rangle|}$ grows more quickly in comparison to $\|\overline{g}^\dagger - \overline{\gamma}\|$, the distance of the normalized gradients, whereas their ratio is approximately constant for MNIST data. \\

\begin{figure}
    \centering
    \includegraphics[width=.46\linewidth]{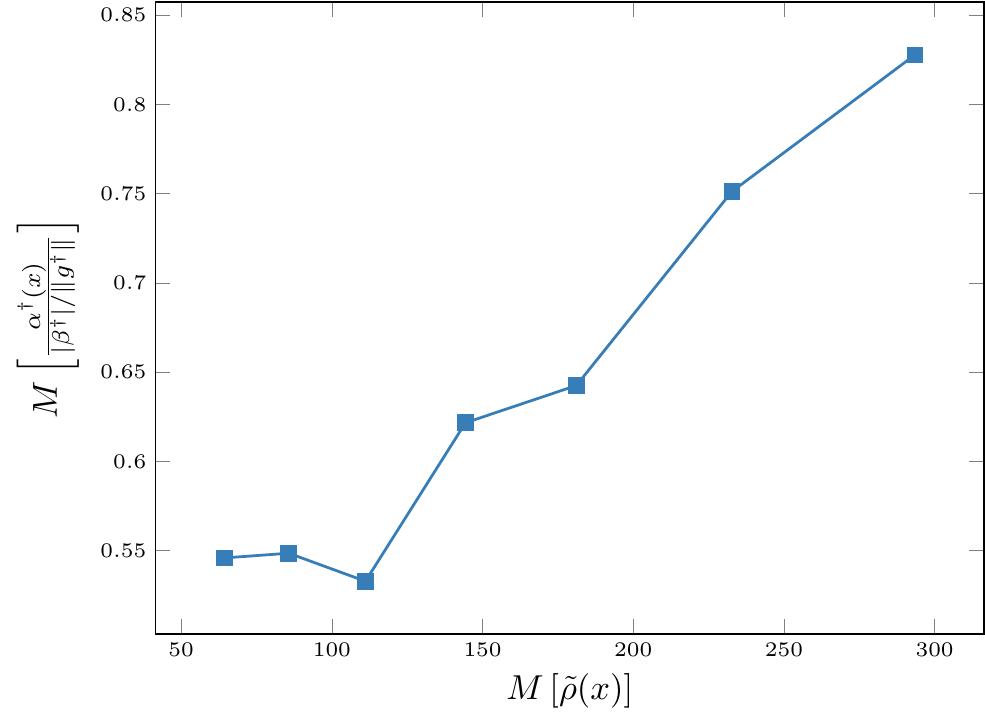}\hspace{10pt}\includegraphics[width=.46\linewidth]{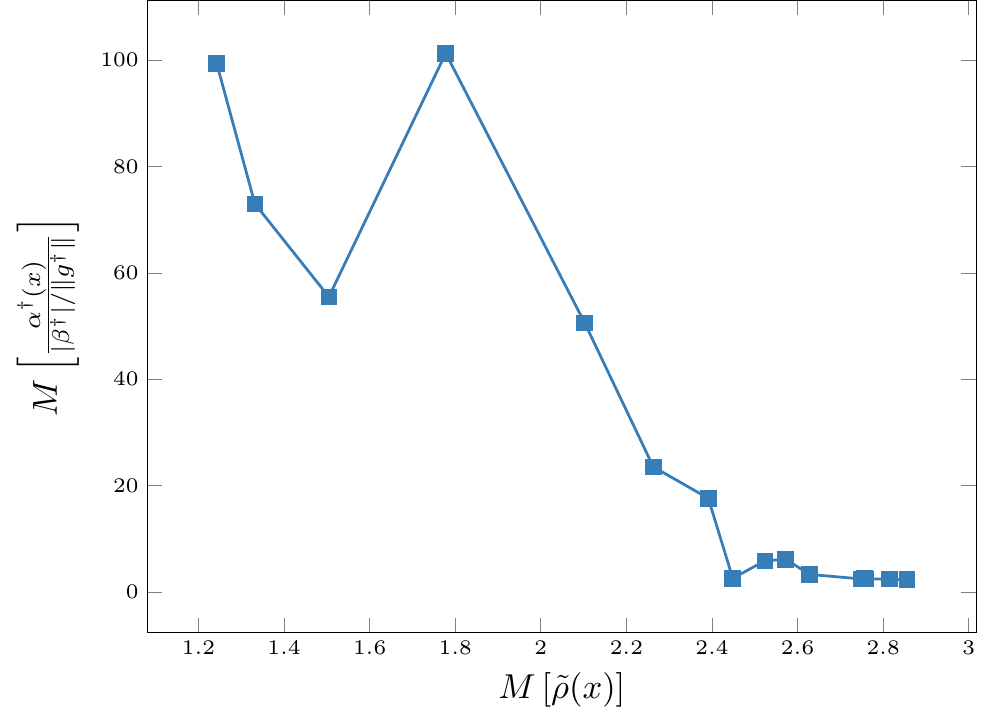}
    \caption{Comparing the size of the summands of inequality \eqref{eq:2arepeat} for the various experiments. In the case of ImageNet (left), $\alpha^\dagger(x)$ takes up an increasing fraction of the right-hand side of the inequality. For MNIST (right), this portion tends to strongly decrease with the robustness. Note however that in this case, $\alignment(x)$ starts out vastly dominating the right-hand side.} 
    \label{fig:theorem2a}
\end{figure}

\begin{figure}
    \centering
    \includegraphics[width=.46\linewidth]{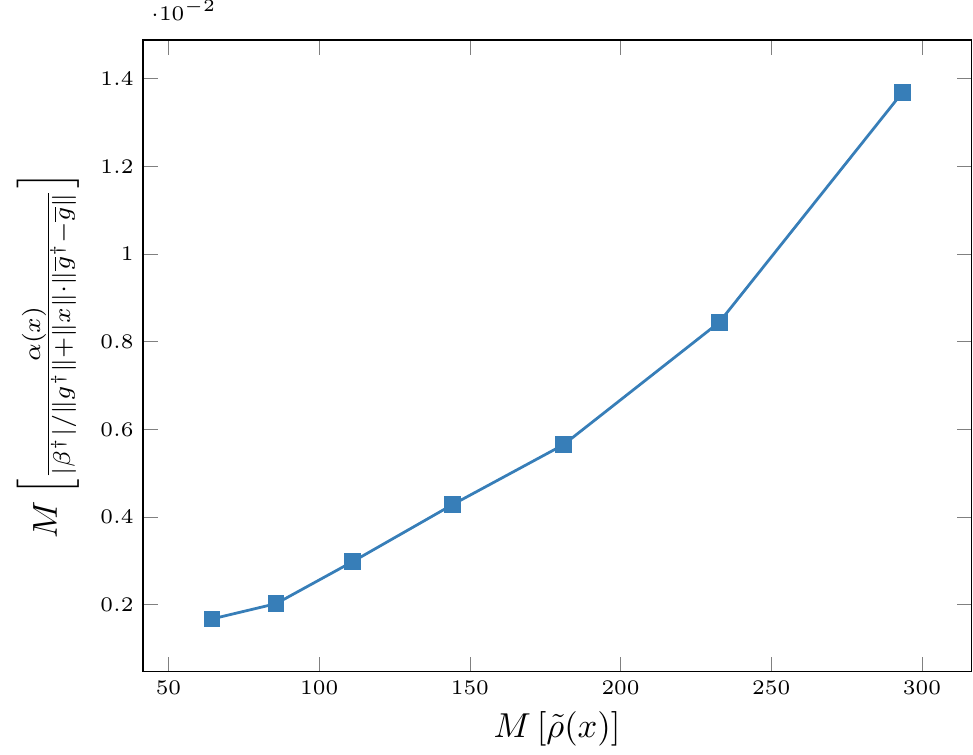}\hspace{10pt}\includegraphics[width=.47\linewidth]{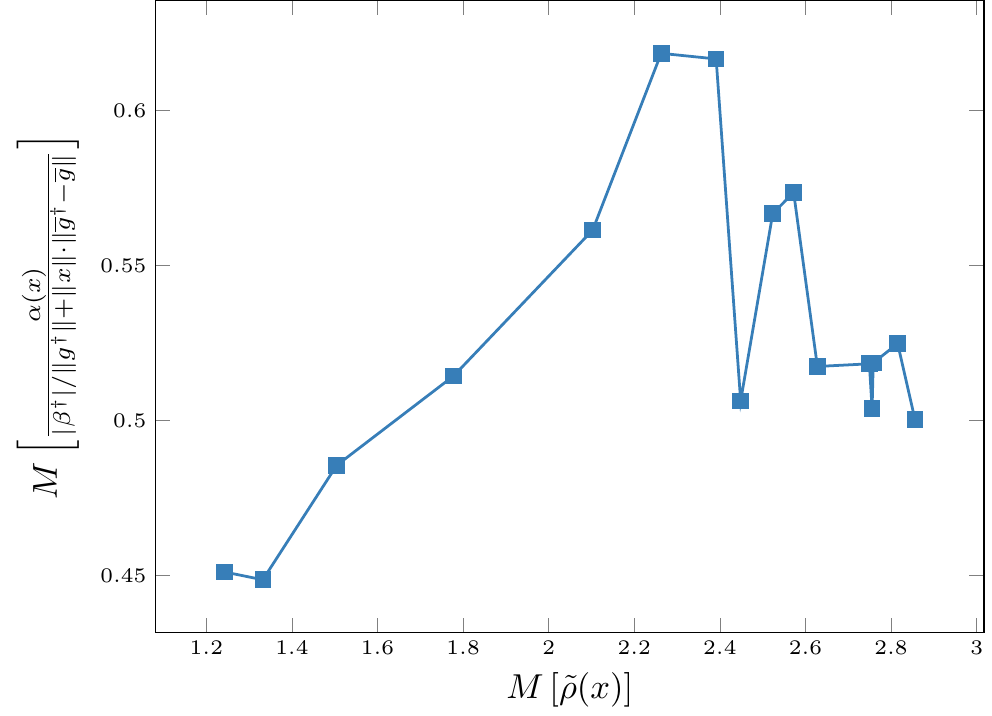}
    \caption{Comparing the size of the summands of inequality \eqref{eq:2brepeat} for the various experiments. For the ImageNet experiments (left), the portion of $\alpha(x)$ of the right-hand side of the inequality increases roughly 7-fold. For MNIST (right), this portion stays roughly constant compared to the variation from Figure \ref{fig:theorem2a}.} 
    \label{fig:theorem2b}
\end{figure}

\begin{figure}
    \centering
    \includegraphics[width=.46\linewidth]{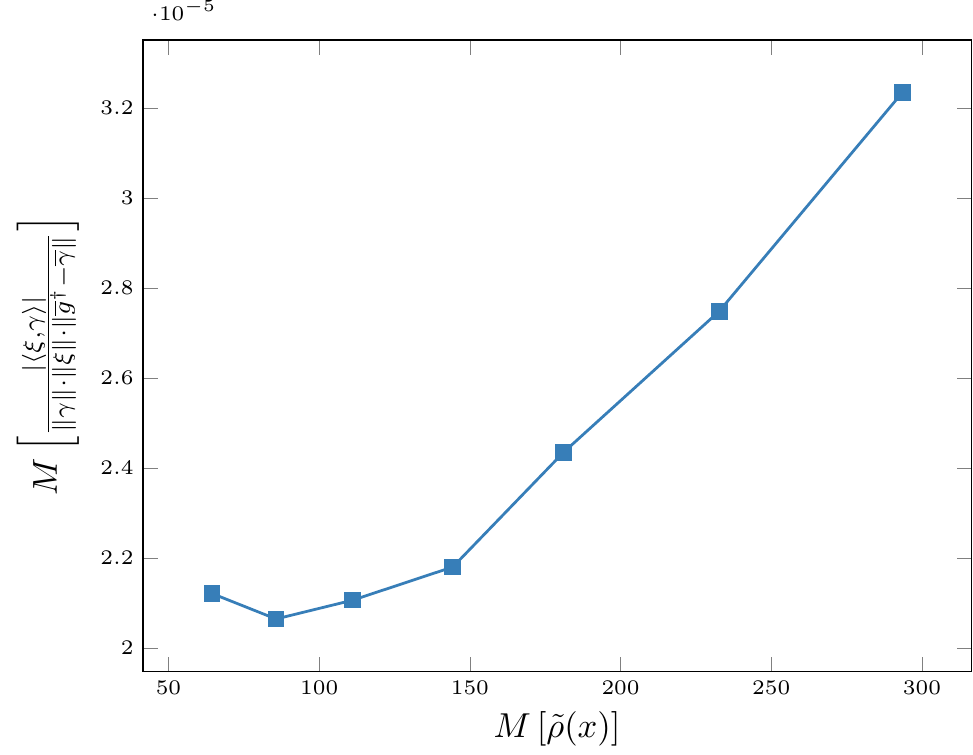}\hspace{10pt}\includegraphics[width=.47\linewidth]{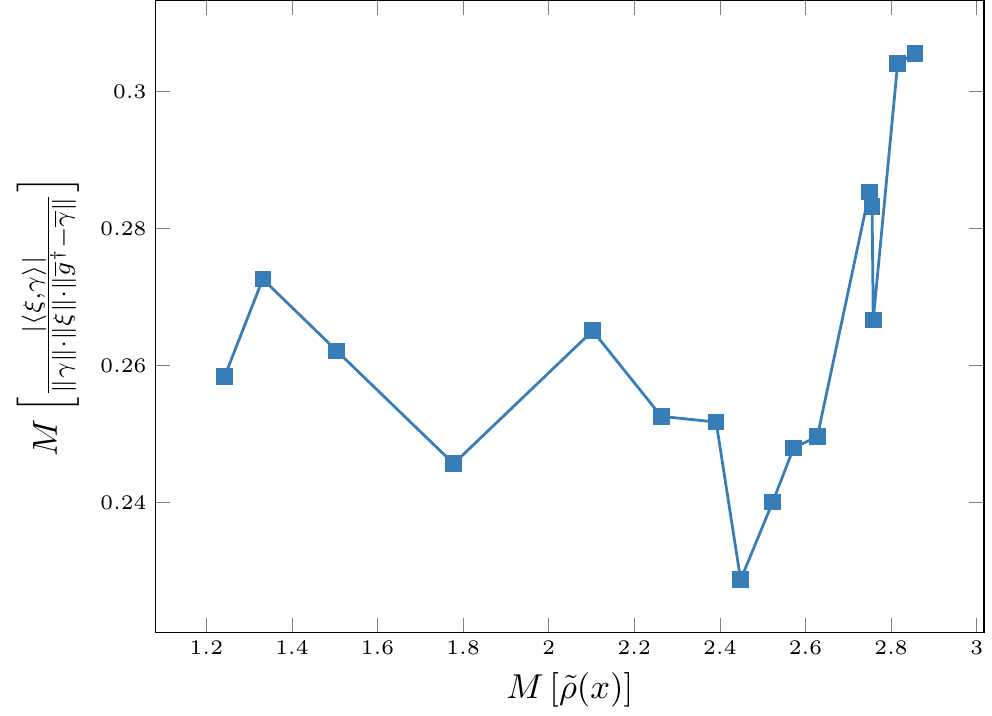}
    \caption{Comparing the size of the summands of inequality \eqref{eq:3arepeat} for the various experiments. In the case of ImageNet (left), the alignment in $\xi$ takes up an increasingly large portion of the right-hand side of the inequality. For MNIST (right), this portion stays roughly constant.} 
    \label{fig:theorem3a}
\end{figure}
To conclude, we have seen that the upper bounds from Theorems \ref{thm:x_bounds} and \ref{thm:xi_bounds} provide valuable information in which ways both the experiments on ImageNet and MNIST are influenced by the respective terms. In the case of ImageNet, we consistently see the alignment terms growing more quickly than the other terms. This might indicate that the growth in alignment stems not only from the growth in the robustness alone, but also from the model becoming increasingly similar to our idealized toy example. In other words, not only does the robustness make the alignment grow, but the connection between these two properties becomes stronger in the case of ImageNet. This is in agreement with the seemingly superlinear growth of the median alignment in Figure \ref{fig:alphax_vs_robustness_imagenet}. 
\\
It is not surprising that a classifier for a problem as complex as ImageNet is highly non-linear, which makes the (pointwise) connection between alignment and robustness rather loose. We hence conjecture that the imposed regularization increasingly restricts the models to be more linear, thereby making them more similar to our initial toy example.\newline
For MNIST, the regularization seems to have the opposite effect: As seen in Figure \ref{fig:theorem2a}, the binarized alignment initially dwarfs the correction term $|\beta^\dagger|/\|g^\dagger\|$ introduced by the locally constant portion of the binarized logit $\Psi^\dagger_x(x)$. As the network becomes more robust, $\Psi^\dagger_x(x)$ is apparently not dominated by the linear terms anymore, while the influence of the locally constant terms (i.e. $\beta^\dagger$) increases. This hypothesis seems sensible, considering MNIST is a very simple problem which we tackled with a comparatively shallow network. This can be expected to yield a model with a low degree of non-linearity. The penalization of the local Lipschitz constant here seems to have the effect of requiring larger locally constant terms $|\beta^\dagger|$, in contrast to the models trained on ImageNet.

We check the validity of these claims by tracking the median size of $|\langle x, g^\dagger \rangle|$ against the median size of $|\Psi^\dagger_x(x)|$ in Figure \ref{fig:linearity}. On MNIST, $M\left[|\langle x, g^\dagger \rangle|\right]$ starts out at approximately $40\%$ of $M\left[|\Psi^\dagger_x(x)|\right]$ and at the end rises to almost $100\%$. Note that this does not indicate that $\beta^i$ is typically close to 0 for all $i$, just that $\beta^\dagger$ is, compared to $\langle x, g^\dagger \rangle$.\newline
On MNIST, this ratio is close to 1 up until $M\left[ \tilde{\rho}(x) \right]\approx2.4$, when it suddenly and quickly falls below $0.5$. This drop is consistent with what we see in Figure \ref{fig:alphax_vs_robustness_mnist}: At around the same point this drop occurs, the alignment starts to saturate. While an increase in the model's median robustness should imply an increase in the model's median alignment, the deviation from linearity weakens the connection between robustness and alignment, such that the two effects roughly cancel out.

In Figure \ref{fig:example_images}, we provide examples for the different gradient concepts we introduced in Theorems \ref{thm:x_bounds} and \ref{thm:xi_bounds}, both for the most robust and non-robust network from our experiment cohort. 

\begin{figure}
    \centering
    \includegraphics[width=.47\linewidth]{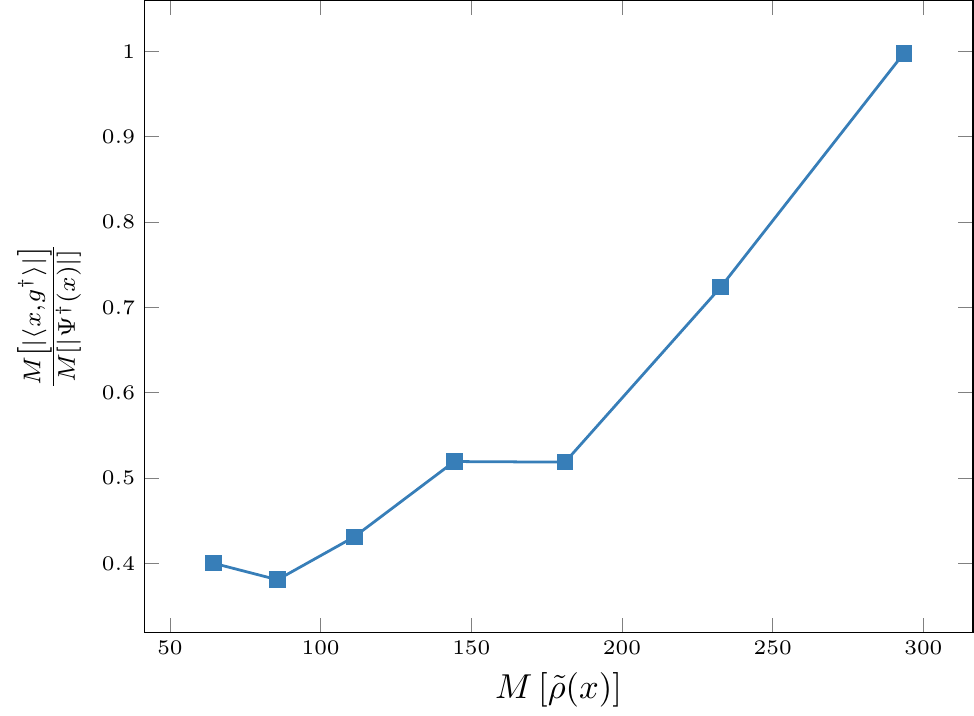}\hspace{10pt}\includegraphics[width=.47\linewidth]{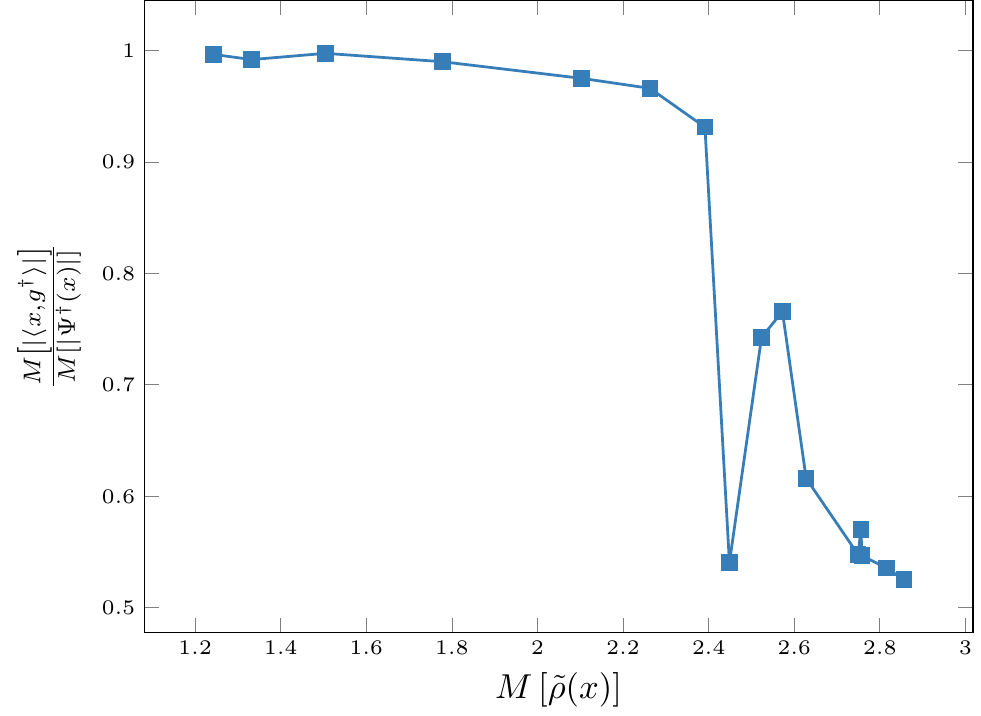}
    \caption{On the ImageNet experiments, the linear term $|\langle x, g^\dagger \rangle|$ takes up an increasing portion of the binarized score $\Psi^\dagger(x)$. In the case of MNIST, $\Psi^\dagger(x)$ is completely dominated by the linear term, before its influence decreases sharply at $M\left[\tilde{\rho}(x) \right]\approx 2.4$.} 
    \label{fig:linearity}
\end{figure}

\section{Conclusion and Outlook}
In this paper, we investigated the connection between a neural network's robustness to adversarial attacks and the interpretability of the resulting saliency maps. Motivated by the binary, linear case, we defined the alignment $\alpha$ as a measure of how much a saliency map matches its respective image. We hypothesized that the perceived increase in interpretability is due to a higher alignment and tested this hypothesis on models trained on MNIST and ImageNet. While on average, the proposed relation holds well, the connection is much less pronounced for individual points, especially on ImageNet. Using some upper bounds for the robustness of a neural network, which we derived using a decomposition theorem, we arrived at the conclusion that the strength of this connection is strongly linked with how similar to a linear model the neural network is locally. As ImageNet is a comparatively complex problem, any sufficiently accurate model is bound to be very non-linear, which explains the difference to MNIST. \\
While this paper shows the general link between robustness and alignment, there are still some open questions. Since we only used one specific robustification method, further experiments should determine the influence of this method. One could explore, whether a different choice of norm leads to different observations. Another future direction of research could be to investigate the degree of (non-)linearity and its connection to this topic. While Theorems \ref{thm:x_bounds} and \ref{thm:xi_bounds} illustrate how the pointwise linearized robustness and alignment may diverge, depending on terms like $g$, $g^\dagger$, $\gamma$ and $\beta^\dagger$, a more in-depth look should focus on \emph{why} and \emph{when} these terms have a certain relationship to each other.\\

From a methodological standpoint, the discovered connection may also serve as an inspiration for new adversarial defenses, where not only the robustness but also the alignment is taken into account. One way of increasing the alignment directly would be through the penalty term $$\lambda \left( \|x\|^2\|\nabla \Psi^i(x)\|^2  - \langle x,\nabla \Psi^i(x) \rangle^2\right),$$ which is bounded from below by 0 via the Cauchy-Schwarz inequality. Any robustifying effects of the increased alignment may however be confounded with the Lipschitz-penalty that the first summand effectively introduces, which necessitates a careful experimental evaluation.

\begin{figure}[H]
    \centering
    \includegraphics[width=.99\linewidth]{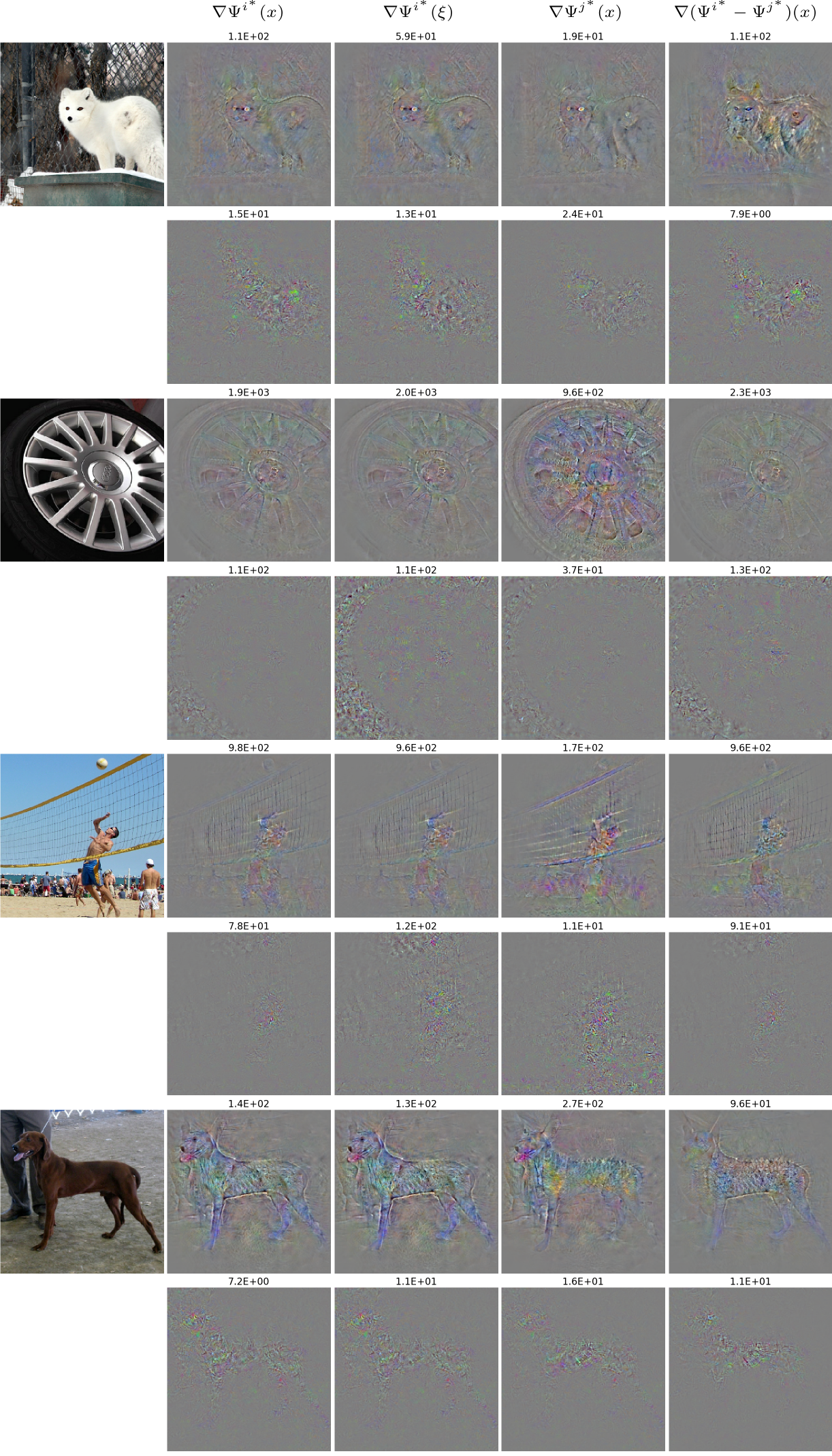}
    \caption{Selected examples from the ImageNet validation set of the different gradients and their respective alignments with $x$, respectively $\xi$. The odd rows are generated with the most robust ImageNet classifier, whereas the even rows are generated by the least robust classifier. The gradient images are individually scaled to fit the color range $[0,255]$.}
    \label{fig:example_images}
\end{figure}

\section*{Acknowledgements}
CE and PM acknowledge funding by the Deutsche Forschungsgemeinschaft (DFG) - Projektnummer 281474342: 'RTG $\pi^3$ - Parameter Identification - Analysis, Algorithms, Applications'. The work by SL was supported by the EPSRC grant EP/L016516/1
for the University of Cambridge Centre for Doctoral Training, the Cambridge Centre for Analysis
and by the Cantab Capital Institute for the Mathematics of Information. CBS acknowledges support from the Leverhulme Trust projects on Breaking the non-convexity barrier and on Unveiling the Invisible, the Philip Leverhulme Prize, the EPSRC grant Nr. EP/M00483X/1, the EPSRC Centre Nr. EP/N014588/1, the European Union Horizon 2020 research and innovation programmes under the Marie Skodowska-Curie grant agreement No 777826 NoMADS and No 691070 CHiPS, the Cantab Capital Institute for the Mathematics of Information and the Alan Turing Institute. We gratefully acknowledge the support of NVIDIA Corporation with the donation of a Quadro P6000 and a Titan Xp GPUs used for this research.

\bibliography{bibliography}
\bibliographystyle{icml2019}

\newpage \phantom{nix}
\newpage
\appendix
\section*{Appendix}

\paragraph{Proof of Equation \eqref{eq:alignment_definition}:}
Note that
\begin{align*}
    &F(x+e) \neq F(x)
    \\
    \Leftrightarrow \ &\langle x+e, z \rangle \langle x, z \rangle  < 0
    \\
    \Leftrightarrow \ & \langle e, z \rangle > |\langle x,z \rangle |.
\end{align*}
The left-hand side is clearly maximized for $e = \|e\| \frac{z}{\|z\|}$, leading to
\begin{align*}
    \|e\| \|z\| > |\langle x,z \rangle |.
\end{align*}
This proves the claim by taking the infimum over $\|e\|$.

\robLemma*
\begin{proof}
As $l(x) \geq \rho(x)$, we can take the infimum in \eqref{eq:generic_robustness} over all perturbations in the local affine component, i.e. $e$ with $\|e\| \leq l(x)$ only. This allows us to reformulate
\begin{align*}
    &F(x+e) \neq F(x)
    \\
    \Leftrightarrow \ &\exists j \neq {i^\ast} : \Psi^j(x+e) > \Psi^{i^\ast}(x+e)
    \\
   \Leftrightarrow \ &\exists j \neq {i^\ast} : \langle \nabla \Psi^j(x) - \nabla \Psi^{i^\ast}(x), e \rangle > \Psi^{i^\ast}(x) - \Psi^j(x).
\end{align*}
The infimum over $\|e\|$ is achieved by choosing $e$ as a multiple of $\nabla \Psi^j(x) - \nabla \Psi^{i^\ast}(x)$. A direct computation then finishes the proof.
\end{proof}

\subsection*{Proofs of Homogenization results}
\begin{lemma}[Euler's Homogeneous Function Theorem]
\label{thm:homo_fcts}
    Let $f:\R^m \to \R$ be a positive one-homogeneous function  that is continuously differentiable on $\R^m \backslash \{ 0 \}$. Then
    \begin{align*}
        f(x) = \langle \nabla f(x), x \rangle
    \end{align*}
\end{lemma}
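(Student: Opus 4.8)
The plan is to differentiate the defining homogeneity relation along rays emanating from the origin. Fix $x \in \R^m$. If $x = 0$, observe first that positive one-homogeneity applied with any $a > 0$ gives $f(0) = f(a \cdot 0) = a\,f(0)$, which forces $f(0) = 0$; so the identity holds trivially in this degenerate case (and indeed at $0$ the gradient need not even exist, so this point is really outside the interesting regime). The substance of the statement concerns $x \neq 0$, which I now treat.

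For $x \neq 0$, I would introduce the auxiliary function $\varphi : (0,\infty) \to \R$, $\varphi(t) := f(tx)$, and compute its derivative in two ways. On the one hand, positive one-homogeneity gives $\varphi(t) = t\,f(x)$, hence $\varphi$ is differentiable with $\varphi'(t) = f(x)$ for every $t > 0$. On the other hand, since $tx \neq 0$ for all $t > 0$, the curve $t \mapsto tx$ stays inside $\R^m \setminus \{0\}$, where $f$ is assumed continuously differentiable, so the chain rule applies and yields $\varphi'(t) = \langle \nabla f(tx), x \rangle$.

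Equating the two expressions gives $\langle \nabla f(tx), x \rangle = f(x)$ for all $t > 0$, and specializing to $t = 1$ produces $\langle \nabla f(x), x \rangle = f(x)$, which is the claim.

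I do not expect a genuine obstacle here: the argument is essentially a one-line differentiation of the relation $f(tx) = t f(x)$. The only places calling for a small amount of care are (i) checking that the ray $t \mapsto tx$ avoids the origin so that the chain rule is legitimate on the punctured domain, and (ii) the behavior at $x = 0$, where $\nabla f$ may be undefined but where homogeneity already pins down $f(0) = 0$; both are handled in a sentence each as above.
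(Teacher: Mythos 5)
Your proof is correct, and it rests on the same basic observation as the paper's — namely that differentiating $f$ along the ray $t \mapsto tx$ via the chain rule produces $\langle \nabla f(tx), x\rangle$ — but the two arguments are finished differently. You differentiate the scalar function $\varphi(t) = f(tx) = t f(x)$ in two ways and evaluate at $t=1$; this is the classical one-line Euler argument, it never needs the gradient at any point other than $x$ itself, and you also dispose of the degenerate case $x=0$ explicitly (where $f(0)=0$ follows from homogeneity but $\nabla f$ need not exist). The paper instead first proves that the partial derivatives are positive zero-homogeneous, i.e.\ $\partial_i f(ax) = \partial_i f(x)$ for $a>0$, and then writes $f(x) = \int_0^1 \langle \nabla f(tx), x\rangle\,dt$, using the constancy of the integrand to collapse the integral to $\langle \nabla f(x), x\rangle$. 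Your route is slightly more economical (no auxiliary homogeneity lemma for $\nabla f$, no integral whose integrand is a priori undefined at $t=0$), while the paper's route has the side benefit of recording the zero-homogeneity of the gradient, which is the geometric reason the saliency map is constant along rays for bias-free ReLU networks. Either proof is acceptable.
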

\begin{proof}
 First note that
 \begin{align*}
     \partial_i f(a x) &= \lim_{t \to 0} \frac{f(ax+te_i)-f(ax)}{t}
     \\
     &= \lim_{t \to 0} \frac{f(ax+ate_i)-f(ax)}{at} = \partial_i f(x).
 \end{align*}
 Hence
 \begin{align*}
     f(x) = \int_0^1 \langle \nabla f(tx), x \rangle \ \text{dt} = \langle \nabla f(x), x \rangle
 \end{align*}
\end{proof}

\homoAlign*
\begin{proof}
Direct consequence of \ref{thm:homo_fcts}.
\end{proof}

\begin{definition}[Neural Networks]
  Define the class of neural networks $\mathcal{N}$ to be any network built on learnable affine transforms (convolutional layers, dense layers) with linear weights $\Theta$ and biases $b$ and ReLU or leaky ReLU activations. The network can include arbitrary skip-connections, batch-normalization layers and max or average pooling layers of arbitrary window size. This in particular includes many state-of-the-art classification networks.
\end{definition}

\begin{lemma}[Homogeneous Networks]
\label{thm:homo_networks}
    For fixed $x$, consider the logit $\Psi^i_{\Theta, b}(x)$ of a network $\Psi_{\Theta, b} \in \mathcal{N}$, where $\Theta$ denotes the linear weights and $b$ the bias vector of the network. Then the function
    \begin{align*}
        f: y \mapsto \Psi^i_{\Theta, b \frac{\|y\|}{\|x\|}}(y),
    \end{align*}
    $f$ is positive one-homogeneous and $f(x) = \Psi^i_{\Theta, b}(x)$.
\end{lemma}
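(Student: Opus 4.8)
The plan is to separate the statement into its trivial part and its substantive part. The equality $f(x) = \Psi^i_{\Theta, b}(x)$ is immediate: substituting $y = x$ makes the rescaling factor $\|y\|/\|x\|$ equal to $1$, so $f(x) = \Psi^i_{\Theta, b}(x)$ (here $x \neq 0$ is tacitly assumed, as it must be for $f$ to be defined). For positive one-homogeneity of $f$, the key is to isolate a single \emph{joint} scaling identity for the network map, namely
\begin{align}\label{eq:joint-hom-sketch}
    \Psi^i_{\Theta,\, a b}(a x) = a\, \Psi^i_{\Theta, b}(x) \qquad \text{for all } a > 0 \text{ and all } x,
\end{align}
where scaling $b$ by $a$ means scaling \emph{every} additive parameter of the network by $a$. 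Granting \eqref{eq:joint-hom-sketch}, the homogeneity of $f$ falls out: writing $c = \|y\|/\|x\| > 0$ one has $f(y) = \Psi^i_{\Theta, c b}(y)$ and, applying \eqref{eq:joint-hom-sketch} with bias vector $cb$, $f(a y) = \Psi^i_{\Theta,\, (ac) b}(a y) = a\, \Psi^i_{\Theta,\, c b}(y) = a\, f(y)$ for every $a > 0$; continuity of $\Psi^i$ additionally forces $f(0) = 0$, completing the check.

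The substantive part is therefore the proof of \eqref{eq:joint-hom-sketch}, which I would carry out by structural induction along the network's forward pass. Thinking of a network in the class $\mathcal{N}$ as a finite composition of elementary operations on feature maps, I would show that under the substitution $x \mapsto a x$, $b \mapsto a b$ every feature map $z$ produced in the forward pass gets replaced by $a z$; the base case is the input $x \mapsto a x$, and the final logit is just one coordinate of the last such feature map. The inductive step amounts to checking each admissible operation separately: a learnable affine map $z \mapsto \Theta_\ell z + b_\ell$ becomes $\Theta_\ell(a z) + a b_\ell = a(\Theta_\ell z + b_\ell)$ since its weight sits in $\Theta$ and its bias in $b$; ReLU and leaky ReLU satisfy $\sigma(a w) = a\,\sigma(w)$ for $a > 0$ because they are piecewise linear with the kink at the origin; max and average pooling commute with multiplication by $a > 0$; a skip connection is an addition of feature maps, and a sum of maps each scaled by $a$ is again scaled by $a$; and inference-time batch normalization $z \mapsto \tfrac{\gamma}{\sqrt{v+\varepsilon}}(z-\mu) + \beta$ scales correctly once the running mean $\mu$ and the shift $\beta$ are counted among the additive parameters $b$ (and the scale $\gamma$ and running variance $v$ among the multiplicative ones, inside $\Theta$). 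Composing these observations, the logit $\Psi^i_{\Theta, b}(x)$ is replaced by $a\,\Psi^i_{\Theta, b}(x)$, which is exactly \eqref{eq:joint-hom-sketch}.

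I do not expect any deep obstacle here; the proof is essentially bookkeeping. The one point that genuinely needs care is the accounting of which parameters are ``bias-like'' and hence scaled: identity \eqref{eq:joint-hom-sketch} is false unless \emph{all} additive parameters — ordinary biases, batch-norm shifts, and batch-norm running means — are collected into $b$, while the purely multiplicative ones (weights, batch-norm scales and running variances) stay fixed in $\Theta$; this is what the remark accompanying the Homogeneous Decomposition theorem, that $b$ includes the batch-norm running means, is pointing at. The only other thing to verify carefully is that the relevant nonlinearities (ReLU, leaky ReLU, and $\max$) are genuinely positive one-homogeneous, not merely piecewise linear — which is where the restriction to activations whose kink is at the origin enters. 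Differentiability plays no role in this lemma, since only function values are involved.
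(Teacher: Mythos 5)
Your proof is correct and takes essentially the same approach as the paper: a layer-by-layer verification that scaling the input and all bias-like parameters by $a>0$ scales every feature map, hence the logit, by $a$. If anything, your joint scaling identity $\Psi^i_{\Theta,ab}(ax)=a\,\Psi^i_{\Theta,b}(x)$ with induction along the forward pass is a slightly more careful phrasing of the paper's ``a single layer is homogeneous and compositions of homogeneous functions are homogeneous'' argument, since the intermediate layers' rescaled biases depend on the original network input rather than on the incoming feature map.
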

\begin{proof}
Consider first a network consisting of a single layer with linear transform $A$ and bias $b$ with ReLU non-linearity. The associated network function is hence given by $\Psi_{A,b}(x) = (Ax+b)_+$. For this network, we compute for $x$ fixed and any $y$ and $a>0$ as
\begin{align*}
    f(ay) &= \left( A(ay) + b \frac{\|ay\|}{\|x\|} \right)_+ 
    \\
    &= \left( a \cdot Ay + a \cdot b \frac{\|y\|}{\|x\|} \right)_+ = a f(y).
\end{align*}
A single layer is hence positive one-homogeneous. A function consisting of compositions of positive one-homogeneous functions is positive one-homogeneous itself as well, the function $f$ associated to a network consisting of affine transforms and ReLU activations is positive one-homogeneous. All of the operations skip-connections, batch-normalization layers and max or average pooling are positive one-homogeneous as well, thus proving the claim.
\end{proof}

\homoDec*
\begin{proof}
Let $f$ be the functions associated with the network $\Psi_{\Theta, b}^{i}$ as in Lemma \ref{thm:homo_networks}. Then by Lemma \ref{thm:homo_fcts} we can compute the value of $f$ at the point $x$ via
\begin{align*}
    f(x) = \langle x, \nabla_y f(y) |_{y=x}\rangle.
\end{align*}
Note that by construction $f(x) = \Psi^i_{\Theta,b}(x)$. We compute the gradient of $f$ at the point $x$ explicitly as
\begin{align*}
    \nabla_y f(y) |_{y=x} = \nabla_x \Psi^i_{\Theta,b}(x) + \frac{x}{\|x\|^2} \langle b , \nabla_b \Psi^i_{\Theta, b}(x) \rangle.
\end{align*}
Combining these results shows
\begin{align*}
    f(x) &= \langle x, \nabla_x \Psi^i_{\Theta,b}(x) + \frac{x}{\|x\|^2} \langle b , \nabla_b \Psi^i_{\Theta, b}(x) \rangle \rangle\\
    &= \langle x, \nabla_x \Psi^i_{\Theta,b}(x) \rangle + \langle b , \nabla_b \Psi^i_{\Theta, b}(x) \rangle.
\end{align*}
\end{proof}

Recall the notation ${i^\ast}=F(x)$ and $j^\ast$ for the minimizer in $j$ in (\ref*{equ:linearized_robustness}).
\boundOne*
\begin{proof}
We have
\begin{equation*}
    \begin{aligned}
        \tilde{\rho}(x) &= \frac{\Psi^{i^\ast}(x)-\Psi^{j^\ast}(x) }{\|\nabla \Psi^{i^\ast}(x) - \nabla\Psi^{j^\ast}(x)\|}\\
        &= \frac{\langle x, \nabla \Psi^{i^\ast}(x) - \nabla \Psi^{j^\ast}(x) \rangle + \beta^{i^\ast}(x) - \beta^{j^\ast}(x) }{\|\nabla \Psi^{i^\ast}(x) -\nabla\Psi^{j^\ast}(x)\|}\\
        &= \left| \langle x, \overline{g}^\dagger \rangle + \frac{\beta^\dagger}{\|g^\dagger\|} \right| \leq \alpha^\dagger(x) + \frac{|b^\dagger|}{\|g^\dagger\|},
    \end{aligned}
\end{equation*}
using the decomposition theorem and the triangle inequality. Further,
\begin{equation*}
    \begin{aligned}
    &\alpha^\dagger(x) + \frac{|b^\dagger|}{\|g^\dagger\|}\\
    =& \left| \langle x, \overline{g}^\dagger \rangle \right| + \frac{|b^\dagger|}{\|g^\dagger\|} \\
    =& \left| \langle x, \overline{g}^\dagger - \overline{g} + \overline{g} \rangle \right| + \frac{|b^\dagger|}{\|g^\dagger\|} \\
    \leq& \left| \langle x, \overline{g} \rangle \right| + \left| \langle x,\overline{g}^\dagger -  \overline{g} \rangle \right| + \frac{|b^\dagger|}{\|g^\dagger\|} \\
    \leq& \hphantom{.} \alpha(x) + \| x \| \cdot \|\overline{g}^\dagger -  \overline{g} \| + \frac{|b^\dagger|}{\|g^\dagger\|},
    \end{aligned}
\end{equation*}
using the Cauchy-Schwarz inequality.
\end{proof}

\boundTwo*
\begin{proof}
We have
\begin{equation*}
    \begin{aligned}
        \tilde{\rho}(x) &= \frac{ \langle x,g^\dagger \rangle + \beta^\dagger \langle \tfrac{g^\dagger }{\|g^\dagger\|^2} , g^\dagger\rangle}{\|g^\dagger\|}\\
         &= \frac{ \langle x + \tfrac{\beta^\dagger}{\|g^\dagger\|}\tfrac{g^\dagger}{\|g^\dagger\|} , g^\dagger\rangle}{\|g^\dagger\|}\\
         &= \langle \xi,\overline{g}^\dagger \rangle = \langle \xi,\overline{g}^\dagger -\overline{g} + \overline{g} \rangle\\
         &\leq |\langle \xi,\overline{\gamma} \rangle| + \| \xi \| \cdot \|\overline{g}^{\dagger} - \overline{\gamma} \|,
    \end{aligned}
\end{equation*}
using the Cauchy-Schwarz inequality in the same way as in the last theorem.
\end{proof}
\subsection*{MNIST Model Architecture}
Here we describe the architecture that was used for the MNIST models.\newline
\begin{center}
\begin{tabular}{|c|}
\hline Conv2D ($3\times 3$, 'same'), 32 feature maps, ReLU \\
\hline Max Pooling (factor 2) \\
\hline Conv2D ($3\times 3$, 'same'), 64 feature maps, ReLU \\
\hline Max Pooling (factor 2) \\
\hline Conv2D ($3\times 3$, 'same'), 128 feature maps, ReLU \\ 
\hline Max Pooling (factor 2) \\
\hline Dense Layer (128 neurons), ReLU \\
\hline Dropout ($0.5$) \\
\hline Softmax \\\hline 
\end{tabular} 
\end{center}

\end{document}